\newcommand{\calF}{F}
\newcommand{\calM}{{M}}
\newcommand{\Lap}{\text{Lap}}
\newcommand{\card}[1]{\texttt{card}(#1)}
\newcommand{\kcx}[1]{}
\newcommand{\kc}[1]{}
\newcommand{\shs}[1]{{\textcolor{blue}{SS: #1}}}
\newcommand{\rmk}[1]{}
\newcommand{\shsit}[1]{{\itshape}}
\def\calS{\mathcal{S}}
\def\calQ{\mathcal{Q}}
\def\calX{\kset{k}}
\def\MQM{\text{MQM}}
\newtheorem{theorem}{Theorem}[section]
\newtheorem{definition}[theorem]{Definition}
\newtheorem{lemma}[theorem]{Lemma}
\def\FA{\mathcal{F}_{\mathcal{A}}}
\def\FB{\mathcal{F}_{\mathcal{B}}}
\def\XA{{X}^{\mathcal{A}}}
\def\XB{{X}^{\mathcal{B}}}
\def\ZA{{Z}_{\mathcal{A}}}
\def\ZB{{Z}_{\mathcal{B}}}
\def\wA{{w}_{\mathcal{A}}}
\def\wB{{w}_{\mathcal{B}}}
\def\SA{{S}_{\mathcal{A}}}
\def\SB{{S}_{\mathcal{B}}}
\def\QA{{Q}_{\mathcal{A}}}
\def\QB{{Q}_{\mathcal{B}}}
\def\epsilonA{{\epsilon_{\mathcal{A}}}}
\def\epsilonB{{\epsilon_{\mathcal{B}}}}
\def\RQ{{R\cup Q}}
\def\inner{\text{mid}}
\def\out{\text{out}}
\def\MA{{M_A}}
\def\MB{{M_B}}
\def\mqm{\text{MQM}}
\def\Ni{{N\backslash\{i\}}}
\def\a{a}
\def\b{b}
\def\aone{a^1}
\def\bone{b^1}
\def\atwo{a^2}
\def\btwo{b^2}
\def\calA{\mathcal{A}}
\def\Pr{P}
\def\effect{max-influence}
\def\eT{e_{\Theta}}
\def\et{e_{\theta}}
\newcommand{\mypara}[1]{\medskip\noindent{\textbf{#1}}}
\def\mmqmapprox{\text{MQMApprox}}
\def\mmqmexact{\text{MQMExact}}
\newcommand{\kset}[1]{[#1]}
\newcommand{\Dinf}[2]{D_{\infty}\big(#1 ~\|~ #2\big)}
\title{\LARGE \bf Composition Properties of Inferential Privacy for Time-Series Data}
\author{
Shuang Song\\ University of California San Diego \\shs037@eng.ucsd.edu \and 
Kamalika Chaudhuri\\ University of California San Diego \\kamalika@eng.ucsd.edu}
\begin{document}

\maketitle
\thispagestyle{empty}
\pagestyle{empty}

\begin{abstract}
With the proliferation of mobile devices and the internet of things, developing principled solutions for privacy in time series applications has become increasingly important. While differential privacy is the gold standard for database privacy, many time series applications require a different kind of guarantee, and a number of recent works have used some form of inferential privacy to address these situations.

However, a major barrier to using inferential privacy in practice is its lack of graceful composition -- even if the same or related sensitive data is used in multiple releases that are safe individually, the combined release may have poor privacy properties. In this paper, we study composition properties of a form of inferential privacy called Pufferfish when applied to time-series data.  We show that while general Pufferfish mechanisms may not compose gracefully, a specific Pufferfish mechanism, called the Markov Quilt Mechanism, which was recently introduced by~\cite{ourpufferfish}, has strong composition properties comparable to that of pure differential privacy when applied to time series data.
\end{abstract}

\IEEEpeerreviewmaketitle

\section{Introduction}

With the proliferation of mobile devices and the internet of things, large amounts of time series data are being collected, stored and mined to draw inferences about the physical environment. Examples include activity recordings of elderly patients to determine the state of their health, power consumption data of residential and commercial buildings to predict power demand responses, location trajectories of users over time to deliver suitable advertisements, among many others. Much of this information is extremely sensitive -- activity recordings yield information about what the patient is doing all day, power consumption of a residence can reveal occupancy, and location trajectories can reveal activities of the subjects. It is therefore imperative to develop principled and rigorous solutions that address privacy in these kinds of time series applications.

The gold standard for privacy in database applications has long been differential privacy~\cite{DMNS06}; the typical setting is that each record corresponds to the private value of a single person, and the goal is to design algorithms that can compute functions such as classifiers and clusterings on the sensitive data, while hiding the participation of a single person. Differential privacy has many good properties, such as post-processing invariance and graceful composition, which have led to its high popularity and practical use over the years.

Unfortunately, many of the time-series applications described above require a different kind of privacy guarantee. Consider the physical activity monitoring application for example, where the goal is to hide activity at small time intervals while revealing long-term activity patterns. Here the entire dataset is about a single patient, and hence hiding their participation will not be useful. An alternative is {\em{entry differential privacy}}, which hides the inclusion of activity at any given single time point in the data; since activities at close-by time points are highly correlated, this will not prevent an adversary from inferring the activity at the hidden time. To address these issues, a number of recent works~\cite{ourpufferfish, NDSS16, GK16, Xiong16} have used the notion of {\em{inferential privacy}}, where the goal is to prevent an adversary who has some prior knowledge, from inferring the state of the time series at any particular time. 

A clean and elegant framework for inferential privacy is Pufferfish~\cite{KM12}, which is our privacy framework of choice. Pufferfish models a privacy problem through a triple $(\calS, \calQ, \Theta)$; here $\calS$ is a set of secrets, which is a set of potential facts that we may wish to hide. $\calQ$ is a set of tuples of the form $(s_i, s_j)$ where $s_i, s_j \in \calS$ which represent which pairs of secrets should be indistinguishable to an adversary. Finally, $\Theta$ is a set of distributions that can plausibly generate the data and describes prior beliefs of an adversary. A mechanism $\calA$ is said to satisfy $\epsilon$-Pufferfish privacy in the framework $(\calS, \calQ, \Theta)$ if an adversary's posterior odds of every pair of secrets $(s_i, s_j)$ in $\calQ$ is within a factor of $e^{\epsilon}$ of its prior odds. Pufferfish models the physical activity monitoring application as follows -- $\calS$ consists of elements of the form $s_t^a$, which represent {\em{patient has activity $a$ at time $t$}}, $\calQ$ consists of tuples of the form $(s_t^a, s_t^b)$ for all $t$ and all activity pairs $(a, b)$, and $\Theta$ consists of a set of Markov Chains that describe how activities transition across time.  

However, a major limitation of Pufferfish privacy is that except under very special conditions, it often does not compose gracefully -- even if the same or related sensitive data is used in multiple Pufferfish releases that are individually safe, the combined release may have poor privacy guarantees~\cite{KM12}. In many real applications, same or related data is often used across applications, and this forms a major barrier to the practical applicability of Pufferfish. 

In this paper, we study this question, and we show a number of composition results for Pufferfish privacy for time series applications in the framework described above. Our results look at two scenarios -- sequential and parallel composition; the first is when the same sensitive data is used across multiple computations, and the second is when disjoint sections of the Markov Chain are used in different computations. Note that while in differential privacy, composition in the second case is trivial, this does not apply to Pufferfish, as information about the state of one segment of a Markov Chain can leak information about a correlated segment. 

For {\em{sequential composition}}, we show that while in general we cannot expect any arbitrary Pufferfish mechanism to compose gracefully even for the time series framework described above, a specific mechanism, called the Markov Quilt Mechanism, that was recently introduced by~\cite{ourpufferfish}, does compose linearly, much like pure differential privacy. For {\em{parallel composition}}, we provide two results; first, we show a general result that applies to any Pufferfish mechanism in the framework described above and shows that the privacy guarantee obtained from two releases on two disjoint segments $A$ and $B$ of the Markov Chain is the worse of the two guarantees plus a correction factor that depends on the distance between $A$ and $B$ and properties of the chain. Second, we show that if the two segments of the chain are far enough, then, under some mild conditions, using a specific version of the Markov Quilt Mechanism can provide even better parallel composition guarantees, matching those of differential privacy.  Our results thus demonstrate that the Markov Quilt Mechanism and its versions have strong composition properties when applied to Markov Chains, thus motivating their use for real time-series applications.

\subsection{Related Work}

Since graceful composition is a critical property of any privacy definition, there has been a significant amount of work on differential privacy composition~\cite{DMNS06}, and it is known to compose rather gracefully. \cite{DMNS06} shows that pure differential privacy composes linearly under sequential composition; for parallel composition, the guarantees are even better, and the combined privacy guarantee is the worst of the guarantees offered by the individual releases. \cite{DRV10} shows that a variant of differential privacy, called approximate differential privacy, has even better sequential composition properties than pure differential privacy. Optimal composition guarantees for both pure and approximate differential privacy are established by~\cite{KOV}. Finally, \cite{Abadi17} provides a method for numerically calculating privacy guarantees obtained from composing a number of approximate differentially private mechanisms.

In contrast, little is known about the composition properties of inferential privacy. \cite{KM12} provides examples to show that Pufferfish may not sequentially compose, except in some very special cases. \cite{HMD13} shows that a specialized version of Pufferfish, called Blowfish, which is somewhat closer to differential privacy does have graceful sequential composition properties; however, Blowfish does not apply to time-series data. \cite{ourpufferfish} provides limited privacy guarantees for the Markov Quilt Mechanism under serial composition; however, these guarantees are worse than linear, and they only apply under much more stringent conditions -- namely, if all the mechanisms use the same active Markov Quilt.

\section{Preliminaries}
\subsection{Time Series Data and Markov Chains}
\label{sec:mc}

It is common to model time-series data as Markov chains.

\mypara{Example 1.}
Suppose we have data tracking the physical activity of a subject: $(X_1,X_2,\dots,X_T)$ where $X_t$ denotes activity (e.g, running, sitting, etc) of the subject at time $t$. Our goal is to provide the aggregate activity pattern of the subject by releasing (an approximate) histogram, while preventing an adversary from finding out what the subject was doing at a specific time $t$.

\mypara{Example 2.}
Suppose we have power consumption data for a house: $(X_1,X_2,\dots,X_T)$, where $X_t$ is the power level in Watts at time $t$. 
Our goal is to output a general power consumption pattern of the household by releasing (an approximate) histogram of the power levels, while preventing an adversary from inferring the power level at a specific time $t$;
specific power levels may be sensitive information, as the presence or absence of family members at a given time can be inferred with the power level.

\mypara{Markov Chains.}
Temporal correlation in this kind of time-series data is usually captured by a Markov chain $X_1 \rightarrow X_2 \rightarrow \ldots \rightarrow X_T$, where $\kset{k}$ represents all possible states and $X_t\in\kset{k}$ represents the state at time $t$.
In Example 1, $X_t$ represents the activity performed by the subject at time $t$ and $\kset{k}$ represents all possible activities.
In Example 2, $X_t$ represents the power level of the house at time $t$ and $\kset{k}$ represents all possible power levels. 
The transition from one state to another is determined by a transition matrix $P$, and state of $X_1$ is drawn from an initial distribution $q$.

\subsection{The Pufferfish Privacy Framework}

Pufferfish privacy framework captures the privacy in these examples. We next define it and specify how the examples mentioned fit in.

A Pufferfish framework is specified by three parameters -- a set of secret $\calS$, a set of secret pairs $\calQ$ and a set of data distributions $\Theta$. $\calS$ consists of possible facts about the data that need to be protected. $\calQ \subseteq \calS \times \calS$ is a set of secret pairs that we want to be indistinguishable. $\Theta$ is a set of distributions that can plausibly generate the data and captures the correlation among records; each $\theta\in\Theta$ represents one adversary's belief of the data. The goal of Pufferfish framework is to ensure indistinguishability of the secrets pairs in $\calQ$ under any belief in $\Theta$.  
Now we define Pufferfish privacy under the framework $(\calS,\calQ,\Theta)$.

\begin{definition}[Pufferfish Privacy]
A privacy mechanism $M$ is said to be $\epsilon$-Pufferfish private in a framework $(\calS, \calQ, \Theta)$ if for datasets $X \sim \theta$ where $\theta \in \Theta$, for all secret pairs $(s_i, s_j) \in \calQ$ and for all $w \in \text{Range}(M)$, we have
\begin{equation} \label{eqn:defpf}
e^{-\epsilon} \leq \frac{P_{M,\theta}(M(X)=w|s_i, \theta)}{P_{M,\theta}(M(X)=w|s_j, \theta)} \leq e^{\epsilon} 
\end{equation}
when $s_i$ and $s_j$ are such that $P(s_i|\theta) \neq 0$, $P(s_j | \theta) \neq 0.$
\end{definition}

\mypara{Pufferfish Framework for Time-Series Data:}
We can model the time-series data described in the previous section with the following Pufferfish framework. 

Let the database be a Markov chain $X = (X_1 \rightarrow X_2 \rightarrow \ldots \rightarrow X_T)$, where each $X_i$ lies in the state space $\kset{k}$. Such a Markov Chain may be fully described by a tuple $(q, P)$ where $q$ is an initial distribution and $P$ is a transition matrix.

Let $s^i_a$ denote the event that $X_i$ takes value $a\in\kset{k}$. The set of secrets is $\calS = \{ s^i_a\ : a \in \calX, i \in \kset{T}\}$, and the set of secret pairs is $\calQ = \{ (s^i_a, s^i_b): a, b \in\kset{k}, a\neq b, i \in \kset{T} \}$. 
Each $\theta = (q_\theta, P_\theta) \in \Theta$ represents a Markov chain of the above structure with transition matrix $P_\theta$ and initial distribution $q_\theta$.

In the first example, the state space represents the set of all possible activities and $s^i_a$ or $X_i = a$ represents the event that the subject is engaged in activity $a$ at time $i$. $\calQ$ indicates that we do not want the adversary to distinguish whether the subject is engaged in activity $a$ or $b$ at a given time.
In the second example, the state space represents the set of all possible power levels and $s^i_a$ or $X_i = a$ represents the event that the power level of the house is $a$ at time $i$. $\calQ$ indicates that we do not want the adversary to distinguish whether the house is at power level $a$ or $b$ at a given time.

\subsection{Notation}

We use $X$ with a lowercase subscript, for example, $X_i$, to denote a single node in the Markov chain, and $X$ with an uppercase subscript, for example, $X_A$, to denote a set of nodes in the Markov chain. For a set of nodes $X_A$ we use the notation $\card{X_A}$ to denote the number of nodes in $X_A$. 

For $I\subseteq \kset{T}$, we use $X^{I}$ to denote the subchain $\{X_i\}_{i\in I} \subseteq X$, and  
we use $\calS^{I}$ to denote $\{s^i_a\ : a \in \calX, i \in I\}$, $\calQ^{I}$ to denote $\{ (s^i_a, s^i_b): a, b \in\kset{k}, a\neq b, i \in I\}$.

\subsection{The Markov Quilt Mechanism}
\cite{ourpufferfish} proposes the Markov Quilt Mechanism (MQM). It can be used to achieve Pufferfish privacy in the case where $\Theta$ consists of Bayesian networks, of which Markov chains are special cases. We restate the algorithm and the corresponding definitions in this section.

To understand the main idea of \mqm, consider a Markov chain $\theta$.
Any two nodes in $\theta$ are correlated to a certain degree, which means releasing the state of one node potentially provides information on the state of the other. However, the amount of correlation between two nodes usually decays as the distance between them grows. 
Consider a node $X_i$ in the Markov chain. The nodes close to $X_i$ can be highly influenced by its state, while the nodes that are far away are almost independent.
Therefore to hide the effect of a node $X_i$ on the result of a query, \mqm\ adds noise that is roughly proportional to the number of nearby nodes, and uses a small correction term to account for the effect of the almost independent set.

To measure the amount of dependence, \cite{ourpufferfish} defines \effect. 
\begin{definition}[\effect] \label{def:effect}
The \effect\ of a variable $X_i$ on a set of variables $X_A$ under $\Theta$ is
\begin{align}\label{eqn:effect}
&e_{\Theta}(X_A | X_i) =\\& \sup_{\theta \in \Theta} \max_{a, b \in \calX}  \max_{x_A \in \calX^{\card{X_A}}} \log \frac{\Pr(X_A = x_A | X_i = a, \theta)}{\Pr(X_A = x_A | X_i = b, \theta)}.\nonumber
\end{align}
\end{definition}
A higher \effect\ means higher level of correlation between $X_i$ and $X_A$, and \effect\ becomes $0$ if $X_i$ and $X_A$ are independent. For simplicity, we would use $e_\theta$ to denote $e_{\{\theta\}}$.

In a Markov chain, the \effect\ can be calculated exactly given the transition matrix $P_{\theta}$ and initial distribution $q_{\theta}$. It can also be approximated using properties of the stationary distribution and eigen-gap of the transition matrix if the Markov chain is irreducible and aperiodic. \cite{ourpufferfish} shows the following upper bound of \effect.
\begin{lemma}\label{lem:effect_approx}
For an irreducible and aperiodic Markov chain described by $\theta=(q_\theta, P_\theta)$, let $P^*_\theta$ be the time reversal of $P_\theta$.
Let $\pi_\theta$ be the stationary distribution of $\theta$ and $\pi_\theta^{\min} = \min_{x\in \kset{k}} \pi_\theta(x)$
and let 
$g_\theta = \min \{1 - |\lambda| : P_\theta P^*_\theta x = \lambda x, |\lambda| < 1\}$ be the eigen-gap of $P_\theta P^*_\theta$.
If $\pi_\theta >0$, $g_\theta >0$ and $a, b \geq \frac{2\log(1/\pi_\theta^{\min})}{g_{\theta}}$,
then for $X_Q = \{X_{i-a}, X_{i+b}\}$, 
\begin{align}\label{eqn:effect_approx}
    &e_{\theta}(X_Q|X_i) \\
\leq& 2 \log \frac{\pi_\theta^{\min} + \exp(-g_\theta a/2)}{\pi_\theta^{\min} - \exp(-g_\theta a/2)} + \log \frac{\pi_\theta^{\min} + \exp(-g_\theta b/2)}{\pi_\theta^{\min} - \exp(-g_\theta b/2)}. \nonumber
\end{align}
\end{lemma}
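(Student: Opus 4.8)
The plan is to use the Markov structure to split the influence on the two-node set $X_Q=\{X_{i-a},X_{i+b}\}$ into a backward and a forward contribution, and then control each by the mixing rate of the chain. To avoid colliding with the state symbols $a,b$ of Definition~\ref{def:effect} (the $a,b$ in the statement are distances), let me write $s,s'\in\kset{k}$ for the two values of $X_i$ appearing in the influence ratio and $u,v\in\kset{k}$ for the values of $X_{i-a}$ and $X_{i+b}$. The first step is the decomposition: conditioned on $X_i$, the past node $X_{i-a}$ and the future node $X_{i+b}$ are independent, so the joint conditional factorizes and the logarithm of the influence ratio separates additively, giving
\[
e_\theta(X_Q\mid X_i)\le e_\theta(X_{i-a}\mid X_i)+e_\theta(X_{i+b}\mid X_i),
\]
the inequality (rather than equality) arising because the two maximizations over $(s,s')$ may be attained separately. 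It then suffices to bound the two single-node influences.

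For the forward term, $\Pr(X_{i+b}=v\mid X_i=s,\theta)=(P_\theta^{\,b})_{s,v}$ is a genuine $b$-step transition probability. The crux is a relative-error mixing estimate driven by the eigen-gap $g_\theta$ of $P_\theta P^*_\theta$: for an irreducible, aperiodic chain one has $\big|(P_\theta^{\,b})_{s,v}/\pi_\theta(v)-1\big|\le\eta_b$ with $\eta_b:=\exp(-g_\theta b/2)/\pi_\theta^{\min}$. Plugging this into the ratio $(P_\theta^{\,b})_{s,v}/(P_\theta^{\,b})_{s',v}$ bounds it by $(1+\eta_b)/(1-\eta_b)$, which after multiplying through by $\pi_\theta^{\min}$ is exactly $\log\frac{\pi_\theta^{\min}+\exp(-g_\theta b/2)}{\pi_\theta^{\min}-\exp(-g_\theta b/2)}$, the single-log forward contribution. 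The hypothesis $b\ge 2\log(1/\pi_\theta^{\min})/g_\theta$ is precisely what forces $\eta_b\le1$, keeping the denominator positive and the logarithm well defined.

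The backward term is where the factor of $2$ appears. Since $\Pr(X_{i-a}=u\mid X_i=s)$ is not a forward transition, I would invert it by Bayes' rule, $\Pr(X_{i-a}=u\mid X_i=s)=(P_\theta^{\,a})_{u,s}\,\Pr(X_{i-a}=u)/\Pr(X_i=s)$, and form
\[
\frac{\Pr(X_{i-a}=u\mid X_i=s)}{\Pr(X_{i-a}=u\mid X_i=s')}=\frac{(P_\theta^{\,a})_{u,s}}{(P_\theta^{\,a})_{u,s'}}\cdot\frac{\Pr(X_i=s')}{\Pr(X_i=s)}.
\]
The same estimate controls both factors: $(P_\theta^{\,a})_{w,s}/\pi_\theta(s)\in[1-\eta_a,1+\eta_a]$ for every $w$, and since $\Pr(X_i=s)=\sum_w \Pr(X_{i-a}=w)(P_\theta^{\,a})_{w,s}$ is a convex combination, $\Pr(X_i=s)/\pi_\theta(s)$ lies in the same interval. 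Writing each factor as $\pi_\theta(s)/\pi_\theta(s')$ times a relative-error ratio, the stationary factors $\pi_\theta(s)/\pi_\theta(s')$ and $\pi_\theta(s')/\pi_\theta(s)$ cancel, leaving $\big((1+\eta_a)/(1-\eta_a)\big)^2$ and hence $2\log\frac{\pi_\theta^{\min}+\exp(-g_\theta a/2)}{\pi_\theta^{\min}-\exp(-g_\theta a/2)}$. Adding the forward and backward bounds gives the claim.

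The main obstacle is producing the relative-error mixing estimate with the correct constants for a chain that need not be reversible; this is exactly why the statement passes through the time-reversal $P^*_\theta$ and the gap of $P_\theta P^*_\theta$ rather than of $P_\theta$ itself. I would obtain it from Fill's spectral bound for nonreversible chains, $\sum_v \big((P_\theta^{\,b})_{s,v}-\pi_\theta(v)\big)^2/\pi_\theta(v)\le\frac{1-\pi_\theta(s)}{\pi_\theta(s)}(1-g_\theta)^b$, retaining a single term, bounding $\pi_\theta(s),\pi_\theta(v)\ge\pi_\theta^{\min}$, and using $(1-g_\theta)^{b/2}\le\exp(-g_\theta b/2)$. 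Note that all three uses of the estimate involve only forward transition powers, so the single gap $g_\theta$ suffices for both directions; everything else is the bookkeeping of the decomposition and the Bayes inversion, which is routine once the mixing estimate is in hand.
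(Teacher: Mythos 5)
This lemma is not proved in the paper itself --- it is restated from \cite{ourpufferfish}, so the only possible comparison is against that original argument. Your proof is correct and follows essentially the same route as the source: the Markov factorization of $e_\theta(X_Q|X_i)$ into a backward and a forward single-node influence, Fill's chi-square bound via the eigen-gap of $P_\theta P^*_\theta$ giving the relative-error estimate $\exp(-g_\theta t/2)/\pi_\theta^{\min}$ (valid for any initial distribution, since the marginal of $X_i$ is a convex combination of $a$-step transition rows), and the Bayes inversion of the backward term, whose two relative-error factors account for the coefficient $2$ on the $a$-term while the forward term contributes a single logarithm.
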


To facilitate efficient search for an almost independent set, \cite{ourpufferfish} then defines a Markov Quilt which takes into account the structure a Markov chain.
\begin{definition}[Markov Quilt]
\label{def:mquilt}
A set of nodes $X_Q$, $Q \subset \kset{n}$ in a Markov chain $X$ is a Markov Quilt for a node $X_i$ if the following conditions hold:
\begin{enumerate}
\item Deleting $X_Q$ partitions $X$ into parts $X_N$ and $X_R$ such that $X = X_N \cup X_Q \cup X_R$ and $X_i \in X_N$.
\item For all $x_R \in \calX^{\card{X_R}}$, all $x_Q \in \calX^{\card{X_Q}}$ and for all $a \in \calX$, $P(X_R = x_R | X_Q = x_Q, X_i = a) = P(X_R = x_R | X_Q = x_Q)$.
\end{enumerate}
Thus, $X_R$ is independent of $X_i$ conditioned on $X_Q$.
\end{definition}
Intuitively, $X_R$ is a set of ``remote" nodes that are far from $X_i$, and $X_N$ is the set of ``nearby" nodes; $X_N$ and $X_R$ are separated by the Markov Quilt $X_Q$. 
A Markov Quilt $X_Q$ (with corresponding $X_N$ and $X_R$) of $X_i$ is minimal if among all other Markov Quilts with the same nearby set $X_N$, it has the minimal cardinality.

\begin{lemma}\label{lem:mc minimal quilt}
In a Markov chain $X = \{X_k\}_{k=1}^T$, the set of minimal Markov Quilts of a node $X_i$ is
\begin{align}\label{eqn:minimal quilt}
S_{Q, i} = \{  \{ X_{i-a}, X_{i+b} \}, \{ X_{i-a} \}, \{X_{i+b}\}, \emptyset \nonumber\\
 | 1 \leq a \leq i - 1, 1 \leq b \leq T - i \}.
\end{align}
\end{lemma}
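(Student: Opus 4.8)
The plan is to reduce the two defining conditions of a Markov Quilt (Definition~\ref{def:mquilt}) to a purely combinatorial statement about separation on the path $X_1 \to \cdots \to X_T$, and then read off the forced separators directly. The first step is to argue that, for a Markov chain, condition~2 of Definition~\ref{def:mquilt} holds if and only if $X_Q$ separates $X_i$ from $X_R$ in the path graph, i.e.\ $X_i$ and every node of $X_R$ lie in distinct connected components of $X \setminus X_Q$. The ``if'' direction is the global Markov property of the chain: once the boundary set $X_Q$ is fixed, the variables on $X_i$'s side are conditionally independent of those on the far side. The ``only if'' direction is the completeness of the separation criterion: if some $X_j \in X_R$ is joined to $X_i$ by a path avoiding $X_Q$, then there is a chain consistent with the structure under which $X_j$ and $X_i$ are dependent given $X_Q$, so condition~2 fails. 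Thus a Markov Quilt with nearby set $X_N$ is exactly a set $X_Q \subseteq X \setminus X_N$ that separates $X_i$ from $X_R = (X \setminus X_N) \setminus X_Q$.

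Next, for a fixed nearby set $X_N \ni X_i$, I would identify the forced elements of any quilt. Let $a$ be the least positive integer with $X_{i-a} \notin X_N$ and $b$ the least positive integer with $X_{i+b} \notin X_N$, with the convention that $a$ (resp.\ $b$) is undefined when $X_N$ contains all nodes to the left (resp.\ right) of $X_i$. When $a$ is defined, the nodes $X_{i-1}, \dots, X_{i-a+1}$ all lie in $X_N$, so $X_{i-a}$ is joined to $X_i$ by a quilt-free path unless $X_{i-a} \in X_Q$; hence $X_{i-a}$ is forced into $X_Q$, and symmetrically $X_{i+b}$ is forced in. Conversely, removing just these (at most two) boundary nodes already places every remaining node of $X \setminus X_N$ at a position strictly beyond $X_{i-a}$ or $X_{i+b}$, hence in a component not containing $X_i$, so $\{X_{i-a}, X_{i+b}\}$ is itself a valid quilt. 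Therefore the unique minimal quilt for $X_N$ is $\{X_{i-a}, X_{i+b}\}$, collapsing to $\{X_{i-a}\}$, $\{X_{i+b}\}$, or $\emptyset$ when one or both of $a,b$ are undefined; since $i-a \ge 1$ and $i+b \le T$ we obtain the ranges $1 \le a \le i-1$ and $1 \le b \le T-i$. This shows every minimal Markov Quilt belongs to $S_{Q,i}$.

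For the reverse inclusion I would exhibit, for each element of $S_{Q,i}$, a nearby set realizing it as a minimal quilt: take $X_N = \{X_{i-a+1}, \dots, X_{i+b-1}\}$ for $\{X_{i-a}, X_{i+b}\}$, the half-intervals $\{X_{i-a+1}, \dots, X_T\}$ and $\{X_1, \dots, X_{i+b-1}\}$ for the two single-node quilts, and $X_N = X$ for $\emptyset$. The separation argument above then certifies each of these as the (unique) minimal quilt for its $X_N$, so $S_{Q,i}$ is exactly the set of minimal Markov Quilts.

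The main obstacle is that minimality is defined relative to a fixed $X_N$, which Definition~\ref{def:mquilt} permits to be an arbitrary, possibly non-contiguous, set containing $X_i$. The crux is therefore to show that only the maximal contiguous block of nearby nodes immediately around $X_i$ determines the minimal quilt, so that at most one node per side is ever needed and the result always collapses to one of the four tabulated forms, regardless of how $X_N$ behaves far from $X_i$. A secondary point requiring care is the ``only if'' direction of the separation equivalence used to force $X_{i-a}$ and $X_{i+b}$ into $X_Q$, which I would discharge via the completeness of the separation criterion for the chain, namely that non-separation entails genuine statistical dependence for some admissible chain.
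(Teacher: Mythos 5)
There is nothing in this paper to compare your argument against: Lemma~\ref{lem:mc minimal quilt} is restated from \cite{ourpufferfish} without proof, followed only by an informal gloss (one node on each side, or a single node, or the empty set). Judged on its own merits, your proof is correct. The reduction of condition~2 of Definition~\ref{def:mquilt} to graph separation on the path, the forcing of the two boundary nodes $X_{i-a}, X_{i+b}$ of the contiguous block of $X_N$ around $X_i$, the observation that these two nodes already separate $X_i$ from everything else outside $X_N$, and the explicit realization of each of the four forms by a suitable $X_N$ together give exactly the two inclusions needed; your identification of the crux — that only the maximal contiguous block of $X_N$ around $X_i$ matters, so arbitrary non-contiguous nearby sets still collapse to the four tabulated forms — is the real content of the lemma. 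One point deserves to be made fully explicit rather than deferred: your ``only if'' direction (non-separation implies condition~2 fails) is not a property of an arbitrary fixed chain $\theta$, since for a degenerate chain (e.g.\ i.i.d.\ states) every set, including $\emptyset$, satisfies condition~2 for every partition, and then $\emptyset$ would be the unique minimal quilt for every $X_N$, contradicting the lemma as stated. So the lemma is only true under the structural reading you adopt — the quilt property must hold for every chain consistent with the path structure, equivalently for a faithful parameterization — and your completeness step is what licenses that reading. This is arguably a gap in the original statement that your proof patches, and it is also the reading the mechanism itself needs, since $S_{Q,i}$ serves as a fixed, $\theta$-independent search space in Algorithm~\ref{alg:mqmExact}.
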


That is, one node on its left and one node to its right can form a Markov Quilt for $X_i$. Additionally, a Markov Quilt can also be formed by only one node $X_{i-a}$ (or $X_{i+b}$), in which case  $X_N = \{X_j\}_{j=i-a+1}^{T}$ (or $\{X_j\}_{j=1}^{i+b-1}$); and the empty Markov Quilt is also allowed, with corresponding $X_N$ as the whole chain and $X_R$ as the empty set.

\begin{algorithm}
\caption{\mqm (Dataset $D$, $1$-Lipschitz query $F$, $\Theta$, privacy parameter $\epsilon$)}
\label{alg:mqmExact}
\begin{algorithmic}
\FOR {all $\theta \in \Theta$}
\FOR {all $X_i$}
 \FOR {all Markov Quilts $X_Q \in S_{Q,i}$ where $S_{Q,i}$ is in \ref{eqn:minimal quilt}}
     \STATE {Calculate $e_{\{\theta\}}(X_Q | X_i)$}
    \IF {$e_{\{\theta\}}(X_Q | X_i) < \epsilon$} 
        \STATE{$\sigma_i^\theta(X_Q) = \frac{\card{X_N}}{\epsilon - e_{\{\theta\}}(X_Q | X_i)}$ \qquad /*\textbf{score of $X_Q$}*/}
    \ELSE 
        \STATE{$\sigma_i^\theta(X_Q) = \infty$}
    \ENDIF
\ENDFOR 
\STATE {$\sigma_i^\theta = \min_{X_Q \in S_{Q,i}} \sigma_i^\theta(X_Q)$}  
\ENDFOR
\STATE {$\sigma_{\max}^\theta = \max_i \sigma_i^\theta$}
\ENDFOR
\STATE {$\sigma_{\max} = \max_{\theta\in\Theta} \sigma_{\max}^\theta$}
\STATE{\textbf{return} $F(D) + \sigma_{\max} \cdot Z$, where $Z \sim \Lap(1)$}
\end{algorithmic}%
\end{algorithm}

The Markov Quilt Mechanism for Markov chain is restated in Algorithm \ref{alg:mqmExact}.
Intuitively, for each node $X_i$, \mqm\ searches over all the Markov Quilts, finds the one with the least amount of noise needed, and finally adds the noise that is sufficient to protect privacy of all nodes.

It was shown in \cite{ourpufferfish} that \mqm\ guarantees $\epsilon$-Pufferfish privacy in the framework $(\calS, \calQ, \Theta)$ in Section~\ref{sec:mc} provided that the query $F$ in Algorithm 
\ref{alg:mqmExact} is $1$-Lipschitz. 
Note that any Lipschitz function can be scaled to $1$-Lipschitz function.

Observe that Algorithm~\ref{alg:mqmExact} does not specify how to compute \effect. \cite{ourpufferfish} proposes two versions of \mqm\ -- \mmqmexact\ which computes the exact \effect\ using Definition~\ref{def:effect}, and \mmqmapprox\ which computes an upper bound of \effect\ using Lemma~\ref{lem:effect_approx}.

\mypara{Previous Results on Composition}

To design more sophisticated privacy preserving algorithms, we need to understand the privacy guarantee of the combination of two private algorithms, which is called composition.

There are two types of composition -- parallel and sequential. The first describes the case where multiple privacy algorithms are applied on disjoint data sets, while the second describes the case where they are applied to the same data.

A major advantage of differential privacy is that it composes gracefully. \cite{DMNS06} shows that applying $K$ differentially private algorithms, each with $\epsilon_k$-differential privacy, guarantees $\max_k \epsilon_k$-differential privacy under parallel composition, and $\sum_k \epsilon_k$-differential privacy under sequential composition. 
Better and more sophisticated composition results have been shown for approximate differential privacy \cite{DRV10} \cite{KOV}.

Unlike differential privacy, Pufferfish privacy does not always compose linearly~\cite{KM12}. However, we can still hope to achieve composition for special Pufferfish mechanisms 
or for special classes of data distributions $\Theta$.

\cite{ourpufferfish} does not provide any parallel composition result. 
The following sequential composition result for \mqm\ on Markov chain is provided.
\begin{theorem}
\label{thm:old_compose}
Let $\{F_k\}_{k=1}^K$ be a set of Lipschitz queries, $(\calS,\calQ,\Theta)$ be a Pufferfish framework as defined in Section~\ref{sec:mc}, and $D$ be a database. Given fixed Markov Quilt sets $\{S_{Q,i}\}_{i=1}^n$ for all $X_i$, let $M_k(D)$ denote the Markov Quilt Mechanism that releases $F_k(D)$ with $\epsilon_k$-Pufferfish privacy under $(\calS,\calQ,\Theta)$ using Markov Quilt sets $\{S_{Q,i}\}_{i=1}^n$. Then releasing $(M_1(D), \dots,M_K(D))$
guarantees $K\max_{k\in\kset{K}} \epsilon_k$-Pufferfish privacy under $(\calS,\calQ,\Theta)$.
\end{theorem}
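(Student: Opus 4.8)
The plan is to bound, for each fixed $\theta \in \Theta$, each secret pair $(s^i_a, s^i_b) \in \calQ$ (i.e. $X_i=a$ versus $X_i=b$), and each joint output $w=(w_1,\dots,w_K)$, the log-ratio $\log \frac{P(M(X)=w\mid X_i=a,\theta)}{P(M(X)=w\mid X_i=b,\theta)}$ by $K\max_k \epsilon_k$, where $M=(M_1,\dots,M_K)$; the two-sided bound \eqref{eqn:defpf} then follows by swapping $a$ and $b$, and taking the supremum over $\theta$, the maxima over $a,b$, over $w$, and over the node $X_i$ gives the claim. The essential difficulty, and the reason Pufferfish does not compose in general, is that the $K$ releases read the same correlated chain $X$, so one cannot factor the joint conditional probability into per-mechanism terms the way one does for differential privacy. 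I will exploit two structural facts: first, conditioned on the data $X=x$ the $K$ outputs are independent since each $M_k$ adds its own Laplace noise, so $P(M(X)=w\mid x)=\prod_k P(M_k(X)=w_k\mid x)$; second, because every $M_k$ uses the same candidate quilt sets $\{S_{Q,i}\}$, I may condition all releases on a single Markov Quilt of $X_i$.

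For a fixed node $X_i$ let $X_Q$ be the active Markov Quilt selected for $X_i$ under $\theta$, with its induced partition into the nearby set $X_N\ni X_i$ and the remote set $X_R$ (Lemma~\ref{lem:mc minimal quilt}); since the quilt sets are fixed I take $X_Q$ to be the common active quilt of the $K$ mechanisms. I would first condition on the quilt value, writing $P(M(X)=w\mid X_i=a,\theta)=\sum_{x_Q}P(X_Q=x_Q\mid X_i=a,\theta)\,P(M(X)=w\mid X_Q=x_Q,X_i=a,\theta)$, and bound the two factors separately. For the first factor, Definition~\ref{def:effect} applied with $X_A=X_Q$ gives $P(X_Q=x_Q\mid X_i=a,\theta)\le e^{\et(X_Q\mid X_i)}\,P(X_Q=x_Q\mid X_i=b,\theta)$, contributing the single max-influence term $\et(X_Q\mid X_i)$. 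For the second factor I invoke the Markov Quilt property (Definition~\ref{def:mquilt}): conditioned on $X_Q=x_Q$ the remote nodes $X_R$ are independent of $X_i$, so the only dependence of the event $M(X)=w$ on the secret enters through the nearby coordinates $X_N$.

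It then remains to bound $P(M(X)=w\mid X_Q=x_Q,X_i=a,\theta)$ against its $X_i=b$ counterpart. Writing this conditional as an average of $g(x_N):=\sum_{x_R}\prod_k P(M_k(X)=w_k\mid x_N,x_Q,x_R)\,P(x_R\mid x_Q)$ over $x_N$ drawn from $P(X_N\mid X_Q=x_Q,X_i=a,\theta)$, the key observation is that $g$ is flat in $x_N$: since each $F_k$ is $1$-Lipschitz and $x_N$ has $\card{X_N}$ coordinates, changing $x_N$ changes every $F_k$ by at most $\card{X_N}$, so the product of the $K$ Laplace densities, and hence $g$, varies by at most the factor $\exp\!\big(\card{X_N}\sum_k 1/\sigma_k\big)$, where $\sigma_k$ is the noise scale of $M_k$. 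A function that is flat up to a constant has both of its averages against the $a$-conditional and $b$-conditional laws of $X_N$ lying within that same constant of one another, even though those laws differ and are supported on disjoint configurations; hence the second factor is at most $\exp\!\big(\card{X_N}\sum_k 1/\sigma_k\big)$.

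Combining the two factors gives log-ratio at most $\et(X_Q\mid X_i)+\card{X_N}\sum_k 1/\sigma_k$. By the noise calibration in Algorithm~\ref{alg:mqmExact} the scale of $M_k$ satisfies $\sigma_k\ge \card{X_N}/(\epsilon_k-\et(X_Q\mid X_i))$, so $\card{X_N}/\sigma_k\le \epsilon_k-\et(X_Q\mid X_i)\le \max_j\epsilon_j$; summing over the $K$ mechanisms and adding the single shared term $\et(X_Q\mid X_i)\le\max_j\epsilon_j$ yields the bound $K\max_j\epsilon_j$ (indeed the sharper $\sum_k\epsilon_k$), which completes the argument. I expect the main obstacle to be the last step of the second factor: justifying that the pointwise Laplace likelihood-ratio bound passes through the averaging when the nearby-node laws under $X_i=a$ and $X_i=b$ differ and have disjoint support — this flatness step, together with correctly isolating \emph{one} max-influence term for the shared quilt rather than one per release, is where the correlated-data subtlety is actually resolved. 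A secondary point needing care is the assumption that the $K$ mechanisms share the active quilt at each node; if their selected quilts differ, one must pass to the largest nearby set and re-account the Lipschitz contributions against that common quilt.
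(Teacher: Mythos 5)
The paper never actually proves Theorem~\ref{thm:old_compose} itself --- it is restated from \cite{ourpufferfish} --- so the natural benchmark is the paper's proof of the stronger Theorem~\ref{thm:sequential_mqm_chain} (together with Lemma~\ref{lem:fx and xRQ}), which subsumes this statement. Measured against that, your proof is correct and is essentially the same argument, specialized to the common-active-quilt case that Theorem~\ref{thm:old_compose} assumes: you use conditional independence of the $K$ outputs given the data, reduce the prior ratio to the quilt and bound it by $\et(X_Q|X_i)$ counted \emph{once}, and bound the nearby-set contribution by the Laplace--Lipschitz estimate $\card{X_N}\sum_k 1/\sigma_k \le \sum_k\bigl(\epsilon_k - \et(X_Q|X_i)\bigr)$ --- exactly the two factors in the paper's decomposition, giving the same sharper bound $\sum_k \epsilon_k$. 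The differences are minor: you sum over $x_Q$ and compare averages of a ``flat'' function $g$ of $x_N$, whereas the paper maxes over realizations of the whole of $X_{R\cup Q}$ and integrates over $x_{N\backslash\{i\}}$. One consequence of this choice is worth noting: your formula for $g$ uses $P(X_R = x_R\,|\,X_Q, X_N) = P(X_R = x_R\,|\,X_Q)$, i.e.\ independence of $X_R$ from \emph{all} nearby nodes given the quilt, which is stronger than Definition~\ref{def:mquilt} (independence from $X_i$ alone) but is valid for the chain quilts of Lemma~\ref{lem:mc minimal quilt}; the paper's version needs only the quilt property and so extends to general Bayesian networks.

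One correction to your closing remark. If the active quilts differ across mechanisms (which can indeed happen even with identical candidate sets $S_{Q,i}$, since the score depends on $\epsilon_k$), passing to the \emph{largest} nearby set is the wrong repair: for a mechanism $k$ whose own nearby set is smaller, the Lipschitz term becomes $\card{X_N^{\max}}/\sigma_k$, which can exceed $\epsilon_k - \et(X_Q^k|X_i)$, and the calibration argument breaks. The paper's proof of Theorem~\ref{thm:sequential_mqm_chain} does the opposite: it takes the \emph{intersection} of the nearby sets, so that each Lipschitz term stays below $\epsilon_k - \et(X_Q^k|X_i)$, pairs this with the combined quilt formed by the endpoints nearest to $X_i$, and then controls that quilt's \effect\ by $\sum_k \et(X_Q^k|X_i)$ via Lemma~\ref{lem:max influence ineq}.
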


Notice that this result holds only when the same Markov Quilts are used for all releases. Moreover, the final privacy guarantee depends on the worst privacy guarantees $\max_k \epsilon_k$ over the $K$ releases. In practice, it might not be easy to enforce the \mqm\ to use the same Markov Quilts at all releases; and if even one of the releases guarantees large $\epsilon_k$, the final privacy guarantee can be bad.

\section{Results}
As discussed in the previous section, general Pufferfish mechanisms do not compose linearly. However, we can exploit the properties of data distributions -- Markov chains, and properties of the specific Pufferfish mechanism -- \mqm\ to obtain new parallel composition result as well as improved sequential composition result.

\subsection{Parallel Composition}
\mypara{Setup:}
Consider the Pufferfish framework $(\calS,\calQ,\Theta)$ as described in Section~\ref{sec:mc}.
Parallel composition can be formulated as follows.

Suppose there are two subchains of the Markov chain, $\XA=X^{[T_1, T_2]}$ and $\XB=X^{[T_3, T_4]}$
where $1\leq T_1<T_2<T_3<T_4\leq T$; 
and correspondingly, let $\SA=\calS^{[T_1,T_2]}$, $\QA=\calQ^{[T_1,T_2]}$ and $\SB=\calS^{[T_3,T_4]}$, $\QB=\calQ^{[T_3,T_4]}$.

Suppose Alice has access to subchain $\XA$ and wants to release Lipchitz query $\FA$ while guaranteeing $\epsilonA$-Pufferfish Privacy under framework $(\SA,\QA,\Theta)$; 
and Bob has access to $\XB$ and wants to release Lipchitz query $\FB$ while guaranteeing $\epsilonB$-Pufferfish Privacy under framework $(\SB,\QB,\Theta)$.

Our goal is to determine how strong the Pufferfish privacy guarantee we can get for releasing $(\MA(\XA), \MB(\XB))$.

\mypara{A General Result for Markov Chains}
\begin{theorem}\label{thm:para_any_pf}
Suppose $\MA, \MB$ are two mechanisms such that $\MA(\XA)$ guarantees $\epsilonA$-Pufferfish privacy under framework $(\SA,\QA,\Theta)$ and $\MB(\XB)$ guarantees $\epsilonB$-Pufferfish privacy under framework $(\SB,\QB,\Theta)$. Then releasing $(\MA(\XA), \MB(\XB))$ guarantees $\max\{\min\{\epsilonA + \epsilonB, \epsilonA + \eT(X_{T_2} | X_{T_3})\},
\min\{\epsilonB+\epsilonA, \epsilonB + \eT(X_{T_3}|X_{T_2})\}\}$-Pufferfish Privacy under framework $(\SA\cup\SB,\QA\cup\QB,\Theta)$.
\end{theorem}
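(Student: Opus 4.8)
The plan is to verify the Pufferfish inequality directly. Fix $\theta\in\Theta$ and an arbitrary secret pair $(s^i_a,s^i_b)$, and bound the joint log-odds
\[
\Lambda = \log\frac{\Pr(\MA(\XA)=\wA,\ \MB(\XB)=\wB\mid X_i=a,\theta)}{\Pr(\MA(\XA)=\wA,\ \MB(\XB)=\wB\mid X_i=b,\theta)},
\]
taking the supremum over $\theta$ only at the very end. By symmetry it suffices to treat $i\in[T_1,T_2]$ (Alice's segment); the case $i\in[T_3,T_4]$ is identical with the roles of $\mathcal{A}$ and $\mathcal{B}$ exchanged. The first move is a factorization through a single ``boundary'' node $X_{T_\ast}$, where I will instantiate $T_\ast$ as both $T_3$ and $T_2$. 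Since $\MA$ reads only $\XA$ and $\MB$ reads only $\XB$ with independent internal randomness, and since the Markov property gives $\wB\perp(\wA,X_i)\mid X_{T_3}$ and likewise $\wB\perp(\wA,X_i)\mid X_{T_2}$ (all of Bob's segment lies strictly after $T_2<T_3$), I can write
\[
\Pr(\wA,\wB\mid X_i=c)=\Pr(\wA\mid X_i=c)\sum_{x}\Pr(X_{T_\ast}=x\mid \wA,X_i=c)\,\Pr(\wB\mid X_{T_\ast}=x).
\]

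Substituting this into $\Lambda$ splits it as $\Lambda=\log\frac{\Pr(\wA\mid X_i=a)}{\Pr(\wA\mid X_i=b)}+C$, where the first term is at most $\epsilonA$ by Alice's guarantee and the correction $C$ is the log-ratio of two mixtures of the common family $\{\Pr(\wB\mid X_{T_\ast}=x)\}_x$ with weights $\beta_c(x)=\Pr(X_{T_\ast}=x\mid\wA,X_i=c)$. The key elementary fact is that for any probability weights, $\frac{\sum_x\beta_a(x)g(x)}{\sum_x\beta_b(x)g(x)}\le \frac{\max_x g(x)}{\min_x g(x)}$, so $C$ is bounded by the leakage of the single node $X_{T_\ast}$ onto $\wB$. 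I then exploit this with the two separators. Taking $X_{T_\ast}=X_{T_3}$ and using that $X_{T_3}$ is itself protected by $\MB$ (its secret pairs lie in $\QB$) yields $C\le\epsilonB$. Taking $X_{T_\ast}=X_{T_2}$ and applying data processing ($\wB$ is a function of $\XB$) bounds $C$ by $\et(\XB\mid X_{T_2})$, which collapses via the Markov factorization of $\XB$ through its first node $X_{T_3}$ to the two-node effect $\et(X_{T_3}\mid X_{T_2})\le\eT(X_{T_3}\mid X_{T_2})$. Hence $\Lambda\le\epsilonA+\min\{\epsilonB,\eT(X_{T_3}\mid X_{T_2})\}$.

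The symmetric argument for a secret in Bob's segment (separating through $X_{T_2}$, using that $X_{T_2}$ is protected by $\MA$, and routing Alice's dependence through its last node $X_{T_2}$) gives $\Lambda\le\epsilonB+\min\{\epsilonA,\eT(X_{T_2}\mid X_{T_3})\}$. Taking the supremum over $\theta\in\Theta$ and the worst case over the two kinds of secrets then yields a guarantee of exactly the claimed shape: the maximum of an expression of the form $\epsilonA+\min\{\epsilonB,\cdot\}$ and one of the form $\epsilonB+\min\{\epsilonA,\cdot\}$, with cross-boundary effect terms between $X_{T_2}$ and $X_{T_3}$, certifying Pufferfish privacy under $(\SA\cup\SB,\QA\cup\QB,\Theta)$.

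I expect the main obstacle to be making the conditional-independence factorization fully rigorous: one must argue carefully from the Markov structure that conditioning on a boundary node $X_{T_\ast}$ renders $\wB$ independent of the entire ``past'' $(\wA,X_i,\XA)$, and that this holds \emph{simultaneously} for the two separators $X_{T_2}$ and $X_{T_3}$, so that both upper bounds apply to the very same quantity $\Lambda$ and may be combined in a single $\min$. Everything else---the mixture-ratio inequality, the appeal to the individual guarantees at a boundary node, and the data-processing collapse of $\et(\XB\mid X_{T_2})$ to a two-node effect---is routine once this factorization is established. A minor separate check is the degenerate case where the secret node coincides with the boundary node (e.g.\ $i=T_2$), but there the relevant guarantee applies directly.
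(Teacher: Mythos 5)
Your proposal is correct and takes essentially the same approach as the paper's proof: factor the joint ratio through the separator nodes, bound Alice's factor by $\epsilonA$ via her guarantee, and bound the cross term in two ways (by $\epsilonB$ using Bob's guarantee at $X_{T_3}$, and by the max-influence collapsed through the Markov property); your single unified factorization with two choices of separator $T_\ast\in\{T_2,T_3\}$ merely reorganizes the paper's two-case analysis (secret at $X_{T_2}$ versus interior secret), and handles the degenerate case $i=T_2$ more cleanly. One remark on your final hedge: your bounds $\epsilonA+\min\{\epsilonB,\eT(X_{T_3}\mid X_{T_2})\}$ for Alice-side secrets and $\epsilonB+\min\{\epsilonA,\eT(X_{T_2}\mid X_{T_3})\}$ for Bob-side secrets agree exactly with what the paper's own proof establishes, whereas the printed theorem statement pairs $\epsilonA$ with $\eT(X_{T_2}\mid X_{T_3})$ and $\epsilonB$ with $\eT(X_{T_3}\mid X_{T_2})$ -- the statement appears to contain a typo swapping the two influence terms, so the discrepancy is not a flaw in your argument.
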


Comparing with parallel composition for differential privacy, here we have the extra terms $\eT(X_{T_2} | X_{T_3})$ and $\eT(X_{T_3} | X_{T_2})$ which capture the correlation between $X_{T_2}$ and $X_{T_3}$ -- the end point of the first subchain and the starting point of the second. This is to be expected, since there is correlation among states in the Markov chain. Intuitively, if the two subchains are close enough, releasing information on one can cause a privacy breach of the other. 

\mypara{\mqm\ on Markov Chains}

Let $\MQM(D,F,\epsilon,(\calS,\calQ,\Theta))$ denote the output of \mqm\ on dataset $D$, query function $F$, privacy parameter $\epsilon$ and Pufferfish framework $(\calS,\calQ,\Theta)$.
Suppose Alice and Bob use \mmqmapprox\ to publish $\MQM(\XA,\FA,\epsilonA,(\SA,\QA,\Theta))$ and $\MQM(\XB,\FB,\epsilonB,(\SB,\QB,\Theta))$ respectively.

Before we establish a parallel composition result, we begin with a definition.

\begin{definition}(Active Markov Quilt)
Consider an instance of the Markov Quilt Mechanism $M$. We say that a Markov Quilt $X_Q$ (with corresponding $X_N, X_R$) for a node $X_i$ is {\em{active}} with respect to $\theta\in\Theta$ if
$X_Q = \arg\min_{X_Q \in S_{Q,i}} \sigma_i^{\theta}(X_Q)$, and thus $\sigma_i^{\theta}(X_Q) = \sigma_i^{\theta}$.
\end{definition}

\begin{theorem}\label{thm:para_mqm}
Suppose we run \mmqmapprox\ to release $(\MQM(\XA,\FA,\epsilonA, (\SA,\QA,\Theta)), \MQM(\XB,\FB,\epsilonB,(\SB,\\ \QB,\Theta)))$. 
If the following conditions hold:
\begin{enumerate}
\item for any $\theta\in\Theta$, there exists some $X_i\in\XA$ and $X_j\in\XB$ such that the active Markov Quilts of $X_i$ and $X_j$ with respect to $\theta$ are of the form $\{X_{i-a}, X_{i+b}\}$ and $\{X_{j-a'}, X_{j+b'}\}$ respectively for some $a$, $b$, $a'$, $b'$, and
\item $T_3 - T_2 \geq \max\{T_2 - T_1, T_4-T_3\}$, i.e., $\XA,\XB$ are far from each other compared to their lengths,
\end{enumerate}
then the release guarantees $\max(\epsilonA,\epsilonB)$-Pufferfish Privacy under the framework $(\SA\cup\SB,\QA\cup\QB,\Theta)$.
\end{theorem}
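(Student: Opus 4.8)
The plan is to fix $\theta\in\Theta$, a secret pair $(s^i_a,s^i_b)$ and an output $(\wA,\wB)$, and to bound the joint likelihood ratio $P(\MA(\XA)=\wA,\,\MB(\XB)=\wB\mid X_i=a,\theta)\,/\,P(\MA(\XA)=\wA,\,\MB(\XB)=\wB\mid X_i=b,\theta)$ by $e^{\max(\epsilonA,\epsilonB)}$. By the symmetry between Alice's and Bob's roles I would treat the case where the secret node lies in Alice's segment, $i\in[T_1,T_2]$, and aim for the bound $e^{\epsilonA}$; the case $i\in[T_3,T_4]$ is identical with the two mechanisms exchanged. The one structural fact I would use throughout is the Markov property: since the boundary node $X_{T_2}$ separates $X_i$ from every node of $\XB$, the release $\wB=\MB(\XB)$ can convey information about $X_i$ only through this boundary.

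The core of the argument is to replay the single-release Markov Quilt analysis of~\cite{ourpufferfish} with the pair $(\wA,\wB)$ in the role of the output. Let $X_Q$ be the active quilt of $X_i$ under $\theta$, with nearby set $X_N$ and remote set $X_R$. That analysis uses only one property of the remote set -- that $X_R$ is independent of $X_i$ conditioned on $X_Q$ -- and never that the observation is a function of $\XA$ alone. Consequently, whenever all of $\XB$ lies inside $X_R$, the pair $(\wA,\wB)$ is, conditioned on $X_Q$, independent of $X_i$, and the very same computation yields $\card{X_N}/\sigma_{\max}+\et(X_Q\mid X_i)\le\epsilonA$. This is exactly the situation when the active quilt of $X_i$ is two-sided, $X_Q=\{X_{i-a'},X_{i+b'}\}$ with $i+b'\le T_2<T_3$: the right quilt node sits strictly to the left of $\XB$, so $\XB\subseteq X_R$ and Bob's release is free. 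Thus every node whose active quilt is two-sided is already protected at level $\epsilonA$ by the joint release.

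The remaining, and main, difficulty is the nodes $X_i$ near the boundary $T_2$ whose active quilt is one-sided or empty; for these $\XB$ is no longer remote and $\wB$ genuinely leaks. I would not try to separate $\XB$ with a quilt -- a right quilt node adjacent to such an $X_i$ can carry \effect\ exceeding $\epsilonA$, making its score infinite -- but instead bound the surviving leakage directly. Since $X_{T_3}$ separates $X_i$ from $\XB$, conditioning $\wB$ on $X_{T_3}$ shows that the contribution of $\wB$ to the log-ratio is at most $\et(X_{T_3}\mid X_i)$, the \effect\ of $X_i$ on a node at distance $T_3-i\ge T_3-T_2$. Here both hypotheses enter: Condition~2 forces this distance to be at least the length of $\XA$, and Lemma~\ref{lem:effect_approx} then makes $\et(X_{T_3}\mid X_i)$ decay like $\exp(-g_\theta(T_3-T_2)/2)$. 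At the same time Condition~1 guarantees a node of $\XA$ with a two-sided active quilt, which pins $\sigma_{\max}$ to that node's (large) score and therefore leaves the boundary node with strict slack $\epsilonA-\card{X_N}/\sigma_{\max}-\et(X_Q\mid X_i)>0$ in its own MQM bound. I expect the technical heart of the proof to be the inequality asserting that this slack always dominates the residual leakage $\et(X_{T_3}\mid X_i)$, so that the two contributions sum to at most $\epsilonA$ instead of producing the additive correction of Theorem~\ref{thm:para_any_pf}; this cancellation is precisely the advantage of \mmqmapprox\ over a generic Pufferfish mechanism, and I would expect to need a careful, rather than crude, combination of the $\wA$- and $\wB$-leakages (a worst-case maximum over the boundary state would wrongly charge $\wB$ as if it revealed $X_{T_2}$ exactly).

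Finally I would assemble the pieces: the two cases together cover every secret in $\QA$; the symmetric argument -- invoking the two-sided node of $\XB$ supplied by Condition~1 and the noise level of $\MB$ -- covers every secret in $\QB$; and taking the worse of the two per-segment guarantees gives $\max(\epsilonA,\epsilonB)$-Pufferfish privacy under $(\SA\cup\SB,\QA\cup\QB,\Theta)$. The single genuine obstacle is the boundary estimate of the third paragraph; the interior nodes are a near-verbatim replay of the single-release analysis, and the reductions are routine applications of the Markov property.
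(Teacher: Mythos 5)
Your first two paragraphs are essentially the paper's argument, and they are correct as far as they go: for a node whose active quilt has a right-side node $X_{i+b}$ with $i+b\le T_2<T_3$, one conditions on the realization of the quilt-plus-remote set, Bob's factor $P(\wB\mid\cdot)$ then appears identically in numerator and denominator and cancels, and the single-release computation (Lemma~\ref{lem:fx and xRQ}) yields $e^{\epsilonA}$ with no correction term. The genuine gap is your third paragraph, i.e.\ precisely the nodes near $T_2$ whose active quilt is one-sided-left or empty. There you abandon the quilt-separation route on the grounds that ``a right quilt node adjacent to such an $X_i$ can carry \effect\ exceeding $\epsilonA$,'' but this objection only rules out \emph{adjacent} quilt nodes \emph{inside} $\XA$. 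The paper's proof does exactly what you rejected: it \emph{borrows} the two-sided quilt shape of the node $X_j$ supplied by Condition~1 and transplants it to $X_i$, i.e.\ it analyzes $X_i$ with the quilt $\{X_{i-a},X_{i+b}\}$ at $X_j$'s offsets, allowing the right node to stick out past $T_2$ into the gap between $\XA$ and $\XB$ (the analysis quilt lives in the full chain, not in Alice's segment). Because the \mmqmapprox\ influence bound of Lemma~\ref{lem:effect_approx} depends only on the offsets $(a,b)$, the borrowed quilt's score is at most $\sigma_j^\theta\le\sigma_{\max}^\theta$, so the noise already added suffices for it; and Condition~2 guarantees the borrowed right node stays strictly to the left of $\XB$, so the same cancellation applies to \emph{every} node. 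This translation-invariance-plus-borrowing step is where Conditions~1 and~2 are actually consumed, and it is the idea your proposal is missing.

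Your proposed substitute --- that the slack $\epsilonA-\card{X_N}/\sigma_{\max}^\theta-\et(X_Q\mid X_i)$ in the boundary node's own MQM bound dominates the residual leakage $\et(X_{T_3}\mid X_i)$ --- cannot be completed as stated. Condition~1 only gives $\sigma_{\max}^\theta\ge\sigma_j^\theta$; it places no lower bound on the gap $\sigma_{\max}^\theta-\sigma_i^\theta$ for the problematic nodes. Consider the node at which the active quilt switches from one-sided-left to two-sided as one moves from $T_2$ into the interior: its one-sided score can tie with, or lie within one discretization step of, $\sigma_{\max}^\theta$, so its slack can be zero (or arbitrarily close to it) while $\et(X_{T_3}\mid X_i)>0$. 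Nothing in Conditions~1--2 relates this (possibly vanishing) gap to the leakage, so the ``technical heart'' you defer is not a routine inequality to be checked but a claim that fails in general; the theorem is nevertheless true because the paper's borrowing argument never uses the node's own active quilt at all.
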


The main intuition is as follows. Note that we require the active Markov Quilt of some $X_i \in \XA$ to be of the form $\{X_{i-a},X_{i+b} \}$. For any $X_{i'} \in \XA$, the correction factor added to account for the effect of the nodes $\{X_k \in \XA\}_{k\geq i+b}$ also automatically accounts for the effect of $\XB = X^{[T_3,T_4]}$, provided that $[T_3, T_4]$ does not overlap with $[i'-a,i'+b]$. This is ensured by the second condition in Theorem~\ref{thm:para_mqm}. 

\subsection{Sequential Composition}
Consider the case when Alice and Bob have access to the entire Markov Chain $X=\{X_t\}_{t=1}^T$, and want to publish Lipschitz queries $\calF_A,\calF_B$ with Pufferfish parameters $\epsilon_A,\epsilon_B$ and Pufferfish framework $(\calS,\calQ,\Theta)$ as described in Section~\ref{sec:mc}. 

\mypara{General Results for Markov Chains}

First, we show that an arbitrary Pufferfish mechanism does not compose linearly even when $\Theta$ consists of Markov chains.

\begin{theorem}\label{thm:sequential_any_pf_counterex}
There exists a Markov chain $X$, a function $\calF$ and mechanisms $\MA$, $\MB$ such that both $\MA(X)$ and $\MB(X)$ guarantee $\epsilon$-Pufferfish privacy under framework $(\calS,\calQ,\Theta)$, yet releasing $(\MA(X),\MB(X))$ does not guarantee $2\epsilon$-Pufferfish privacy under framework $(\calS,\calQ,\Theta)$.
\end{theorem}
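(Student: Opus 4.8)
The plan is to prove this existentially, by exhibiting an explicit counterexample rather than arguing in generality. The guiding observation is that the Pufferfish condition \eqref{eqn:defpf} constrains only the \emph{marginal} law of each output, so I will look for a Markov chain and two mechanisms whose individual marginals carry no information about a secret, yet whose joint output determines that secret outright. Concretely, take $\Theta=\{\theta\}$ where $\theta$ is the two-node chain $X=(X_1\to X_2)$ on state space $\{0,1\}$ with uniform initial distribution and transition matrix $P_\theta$ having all entries equal to $1/2$; that is, $X_1$ and $X_2$ are independent uniform bits. This is a valid irreducible, aperiodic Markov chain, so it sits inside the framework of Section~\ref{sec:mc}. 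Fix the secret pair $(s^1_0,s^1_1)\in\calQ$ (whether $X_1=0$ or $X_1=1$); both events have probability $1/2$ under $\theta$, so the conditioning in \eqref{eqn:defpf} is well defined. Define the deterministic mechanisms $\MA(X)=X_1\oplus X_2$ and $\MB(X)=X_2$, and let the nominal query be $\calF(X)=X_1$ (its precise role is inessential, since the statement only asserts the existence of $\MA,\MB$).

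The first step is to verify individual privacy. Because $X_2$ is independent of $X_1$, we have $P_\theta(\MB=w\mid X_1=0)=P_\theta(\MB=w\mid X_1=1)=\tfrac12$ for each $w\in\{0,1\}$; similarly $X_1\oplus X_2$ is a uniform bit independent of $X_1$, so $P_\theta(\MA=w\mid X_1=0)=P_\theta(\MA=w\mid X_1=1)=\tfrac12$. Hence both ratios in \eqref{eqn:defpf} equal $1$, and $\MA,\MB$ are each $0$-Pufferfish private, so in particular $\epsilon$-Pufferfish private for every $\epsilon>0$.

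The second step is to show the joint release collapses. Conditioned on $X_1=0$ the pair $(\MA,\MB)$ equals $(X_2,X_2)$ and is supported on $\{(0,0),(1,1)\}$, whereas conditioned on $X_1=1$ it equals $(X_2\oplus 1,\,X_2)$ and is supported on $\{(1,0),(0,1)\}$. Thus for the output $w=(0,0)$ we get $P_\theta(\MA=0,\MB=0\mid X_1=0)=\tfrac12$ while $P_\theta(\MA=0,\MB=0\mid X_1=1)=0$, so the likelihood ratio is infinite. This violates \eqref{eqn:defpf} for every finite privacy parameter, in particular for $2\epsilon$, which establishes the theorem; note that since each mechanism is $0$-private, the conclusion holds simultaneously for all $\epsilon>0$.

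The conceptual obstacle, and the reason the example is the right one, is reconciling two competing demands: each mechanism must be marginally uninformative about $X_1$, yet the pair must pin $X_1$ down. The XOR construction resolves this precisely because the Pufferfish guarantee controls only the marginal distribution of each output, which is a mixture over the remaining coordinate, and two mixtures can be indistinguishable while their joint refinement is not. This is exactly the structural failure that an unrestricted mechanism can exploit; crucially, nothing about the internal structure of $\MA,\MB$ is used beyond their being admissible $\epsilon$-Pufferfish mechanisms, which is what distinguishes this negative result from the graceful composition of the Markov Quilt Mechanism and motivates Theorem~\ref{thm:old_compose} and the improved bounds that follow.
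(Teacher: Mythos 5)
There is a genuine gap in your verification of individual privacy. You anchor the example in the framework of Section~\ref{sec:mc}, where the secret-pair set $\calQ$ contains $(s^i_a,s^i_b)$ for \emph{every} node $i$ --- in particular the pair $(s^2_0,s^2_1)$ about $X_2$. Pufferfish privacy requires the bound \eqref{eqn:defpf} for every pair in $\calQ$, but your $\MB(X)=X_2$ publishes $X_2$ in the clear: $P_\theta(\MB=0\mid X_2=0)=1$ while $P_\theta(\MB=0\mid X_2=1)=0$, so $\MB$ satisfies no finite privacy parameter under that framework; your check covers only the pair about $X_1$. The gap is not cosmetic for your style of construction: under the full $\calQ$, any deterministic mechanism on this two-bit chain whose output is marginally independent of both $X_1$ and $X_2$ is necessarily a function of $X_1\oplus X_2$ alone, so any two such mechanisms are jointly still independent of $X_1$ --- a deterministic counterexample cannot exist in the unrestricted framework.

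The repair is exactly the move the paper makes: take the secret-pair set to be $\calQ=\{(s^1_0,s^1_1)\}$ only (the paper's proof states ``Suppose the set of secret pairs is $\calQ=\{(X_1=0,X_1=1)\}$''), which the existential form of the theorem permits. With that restriction your argument is correct, and it is a cleaner and stronger counterexample than the paper's: you get $0$-private mechanisms whose joint release has infinite leakage, whereas the paper releases the same noisy query $\calF(X)+Z$, $Z\sim\Lap(1)$, $\calF(X)=X_1+X_2$, twice, and needs case analysis over the Laplace density plus explicit numerics ($p=0.9$, $q=0.01$) to exhibit a finite but super-additive violation. The trade-off is interpretive: the paper's example shows that even a natural noise-adding mechanism fails to compose linearly, while yours exhibits the definitional failure in its most extreme form using mechanisms tailored to the restricted secret set. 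Either suffices for the theorem as stated, but as written your proof asserts a false premise and needs the $\calQ$ restriction made explicit.
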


Now we show that arbitrary Pufferfish mechanisms compose with a correction factor that depends on the max-divergence between the joint and product distributions of $\MA(X)$ and $\MB(X)$. We define max-divergence first.

\begin{definition}[max-divergence]
Let $p$ and $q$ be two distributions with the same support. The max-divergence $D_{\infty}(p, q)$ between them is defined as: 
\[ \Dinf{X}{Y} = \sup_{x \in \text{support}(p)} \log \frac{p(x)}{q(x)}. \]
\end{definition}

Now we state the composition theorem.
\begin{theorem}\label{thm:sequential_any_pf}
Suppose $\MA, \MB$ are two mechanisms used by Alice and Bob 
which guarantee $\epsilonA$ and $\epsilonB$-Pufferfish privacy respectively under framework $(\calS,\calQ,\Theta)$.
If there exists $E$, such that for all ${s_i \in \calS}, \theta \in \Theta$,
\begin{align*}
\Dinf{ & P(\MA(X), \MB(X) | s_i, \theta)}{ \\
 & P(\MA(X) | s_i, \theta) P(\MB(X) | s_i, \theta)} \leq E \\
\Dinf{& P(\MA(X) | s_i, \theta) P(\MB(X) | s_i, \theta)}{\\
 & P(\MA(X), \MB(X) | s_i, \theta)} \leq E,
\end{align*}
then the releasing $(\MA(X), \MB(X))$ guarantees $(\epsilonA + \epsilonB + 2 E)$-Pufferfish Privacy under framework $(\calS,\calQ,\Theta)$.
\end{theorem}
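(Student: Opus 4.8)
The plan is to fix an arbitrary secret pair $(s_i, s_j) \in \calQ$ and an arbitrary $\theta \in \Theta$, and bound the Pufferfish ratio for the joint output $(\MA(X), \MB(X))$ directly, in two moves: first decouple the joint distribution into the product of its two marginals using the max-divergence hypotheses, and then apply the individual privacy guarantees of $\MA$ and $\MB$ to the resulting pair of marginal ratios.

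Concretely, I would write the joint ratio with numerator conditioned on $s_i$ and denominator conditioned on $s_j$. For the numerator, the first max-divergence hypothesis (instantiated at the secret $s_i$) yields $P(\MA(X),\MB(X)\mid s_i,\theta) \le e^{E}\, P(\MA(X)\mid s_i,\theta)\,P(\MB(X)\mid s_i,\theta)$, since $\Dinf{p}{q}\le E$ means $p \le e^{E} q$ pointwise. For the denominator, the second hypothesis (instantiated at $s_j$) yields $P(\MA(X),\MB(X)\mid s_j,\theta) \ge e^{-E}\, P(\MA(X)\mid s_j,\theta)\,P(\MB(X)\mid s_j,\theta)$. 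Dividing these converts the joint ratio into $e^{2E}$ times the product of the two marginal ratios $\frac{P(\MA(X)\mid s_i,\theta)}{P(\MA(X)\mid s_j,\theta)}$ and $\frac{P(\MB(X)\mid s_i,\theta)}{P(\MB(X)\mid s_j,\theta)}$.

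Next I would invoke that $\MA$ is $\epsilonA$-Pufferfish and $\MB$ is $\epsilonB$-Pufferfish under $(\calS,\calQ,\Theta)$: because $(s_i,s_j)\in\calQ$, the first marginal ratio is at most $e^{\epsilonA}$ and the second at most $e^{\epsilonB}$. Multiplying gives the upper bound $e^{\epsilonA+\epsilonB+2E}$ on the joint ratio. The matching lower bound $e^{-(\epsilonA+\epsilonB+2E)}$ follows by symmetry, either by swapping the roles of $s_i$ and $s_j$, or equivalently by applying the second max-divergence bound to the numerator and the first to the denominator together with the lower Pufferfish bounds $e^{-\epsilonA}$ and $e^{-\epsilonB}$. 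Since $(s_i,s_j)$, $\theta$, and the output value were arbitrary, this establishes $(\epsilonA+\epsilonB+2E)$-Pufferfish privacy for the joint release.

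The argument is essentially a clean chain of inequalities, so I do not expect a deep obstacle; the content is entirely in the two hypotheses. The one point requiring care is the bookkeeping over which secret and which direction each bound is applied to: the two max-divergence hypotheses must be invoked at different secrets ($s_i$ in the numerator, $s_j$ in the denominator) and in opposite directions, and one must confirm the hypotheses quantify over all of $\calS$ so that both members of the pair are covered. A minor technical check is that every conditional distribution involved is well-defined, which is guaranteed by the nonzero-probability requirement built into the definition of Pufferfish privacy.
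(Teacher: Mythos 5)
Your proposal is correct and takes essentially the same approach as the paper: both arguments decouple the joint release via the two max-divergence hypotheses, applied in the upper direction at $s_i$ (numerator) and the lower direction at $s_j$ (denominator), and then bound the resulting marginal ratios by $e^{\epsilonA}$ and $e^{\epsilonB}$ to obtain $\epsilonA+\epsilonB+2E$. The only cosmetic difference is that the paper first restates the hypothesis as a bound on the conditional ratio $P(\MB(X)=w_2 \mid \MA(X)=w_1, s, \theta)\,/\,P(\MB(X)=w_2 \mid s, \theta)$ and proceeds via the chain rule, whereas you apply the product-form inequalities to the numerator and denominator directly; the two manipulations are equivalent.
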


The max-divergence between the joint and product distributions of $\MA(X)$ and $\MB(X)$ measures the amount of dependence between the two releases. The more independent they are, the smaller the max-divergence would be and the stronger privacy the algorithm guarantees.

\mypara{\mqm\ on Markov Chains}

We next show that we can further exploit the properties of \mqm\ to provide tighter privacy guarantees than that provided in \cite{ourpufferfish}.
 
Suppose Alice and Bob use \mqm\ to achieve Pufferfish privacy under the same framework $(\calS,\calQ,\Theta)$. We show that when $\Theta$ consists of Markov chains, even if the two runs of \mqm\ use different Markov Quilts, \mqm\ still compose linearly.
This result applies to both \mmqmexact\ and \mmqmapprox. 

\begin{theorem}\label{thm:sequential_mqm_chain}
For the Pufferfish framework $(\calS,\calQ,\Theta)$ defined in Section~\ref{sec:mc}, 
releasing $\MQM(X,\calF_k,\epsilon_k,(\calS,\calQ,\Theta))$ for all $k\in\kset{K}$ guarantees $\sum_{k\in\kset{K}} \epsilon_k$-Pufferfish privacy under framework $(\calS,\calQ,\Theta)$.
\end{theorem}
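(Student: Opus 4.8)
The plan is to reduce the claim to a pointwise bound on the joint likelihood ratio and then exploit the interval structure of Markov Quilts in a chain. By the definition of Pufferfish privacy for the framework of Section~\ref{sec:mc}, and writing $M_k := \MQM(X,\calF_k,\epsilon_k,(\calS,\calQ,\Theta))$, it suffices to fix a node $X_i$, a secret pair $(s^i_a, s^i_b)\in\calQ$, and a distribution $\theta\in\Theta$, and to show that for every joint output $(w_1,\dots,w_K)$,
\[ \frac{P(M_1 = w_1, \dots, M_K = w_K \mid X_i = a, \theta)}{P(M_1 = w_1, \dots, M_K = w_K \mid X_i = b, \theta)} \le e^{\sum_{k\in\kset{K}} \epsilon_k}; \]
the matching lower bound follows by exchanging $a$ and $b$. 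Since each release adds its own independent noise with $Z\sim\Lap(1)$, conditioned on the full chain $X=x$ the joint density factorizes as $\prod_{k}\Lap(w_k - \calF_k(x);\sigma_k)$, where $\sigma_k$ is the scale chosen by the $k$-th run. This factorization is the starting point.

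First I would isolate the contribution of a single release. Fix $k$ and let $X_{Q_k}$ (with nearby set $X_{N_k}\ni X_i$ and remote set $X_{R_k}$) be the active Markov Quilt of $X_i$ with respect to $\theta$, so that by construction $\tfrac{\card{X_{N_k}}}{\sigma_k} + e_\theta(X_{Q_k}\mid X_i) = \epsilon_k$, i.e.\ the defining equation of $\sigma_i^\theta(X_{Q_k})$ in Algorithm~\ref{alg:mqmExact}. Conditioning on $X_{Q_k}=x_{Q_k}$ and using the Markov Quilt property $X_{R_k}\perp X_i\mid X_{Q_k}$, the remote block integrates out into a kernel that does not depend on whether $X_i=a$ or $X_i=b$. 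The privacy loss of release $k$ then splits into a \emph{nearby} piece, bounded by $\card{X_{N_k}}/\sigma_k$ because $\calF_k$ is $1$-Lipschitz on the $\card{X_{N_k}}$ coordinates of $X_{N_k}$ and the $\Lap(\sigma_k)$ noise masks a change of this magnitude, and a \emph{correlation} piece, bounded by $e_\theta(X_{Q_k}\mid X_i)$ via Definition~\ref{def:effect} applied to the quilt. For $K=1$ this reproduces the single-release guarantee $\epsilon_k$.

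The crux is to combine the $K$ releases so that each is charged only $\epsilon_k$ and the bounds \emph{add}. Here I would use the structure of minimal quilts from Lemma~\ref{lem:mc minimal quilt}: every active quilt has the form $\{X_{i-a_k},X_{i+b_k}\}$ and every nearby set is an interval centred at $X_i$, so the nearby sets are nested once the left extents $a_k$ and right extents $b_k$ are treated separately. Two chain facts make this clean: (i) conditioned on $X_i$, the sub-chains to the left and right of $X_i$ are independent, so $e_\theta(\{X_{i-a_k},X_{i+b_k}\}\mid X_i)$ splits additively into a left and a right max-influence; and (ii) conditioning on a quilt node further out leaves the nodes beyond it conditionally independent of $X_i$. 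Processing the releases from the innermost quilt to the outermost and integrating out, at each step, the freshly exposed remote block, the nodes lying beyond release $k$'s quilt are conditionally independent of $X_i$ and therefore contribute nothing to release $k$; release $k$ is charged for its own nearby interval only. Summing the per-release losses yields $\sum_{k\in\kset{K}}\epsilon_k$.

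The main obstacle is precisely this accounting. A single common quilt would have to be the outermost one, and the naive $1$-Lipschitz bound would then charge \emph{every} release for the union of all nearby intervals, giving $\card{X_N}\sum_k 1/\sigma_k$ with $X_N=\bigcup_k X_{N_k}$ --- strictly worse than the target $\sum_k \card{X_{N_k}}/\sigma_k$ whenever the quilts differ. Avoiding this over-counting is the whole content of the theorem: one must integrate out each release's remote block with respect to \emph{its own} quilt, using that those remote nodes are independent of $X_i$ given that quilt, so that the extra nodes a larger quilt exposes are free for the releases with smaller quilts. Making this telescoping conditioning rigorous --- in particular verifying that conditioning on outer quilt nodes does not inflate the inner max-influence terms beyond their unconditional values --- is the delicate step, and it is exactly where the Markov chain assumption, rather than a general Bayesian network, is used.
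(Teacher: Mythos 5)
Your reduction to a pointwise bound on the joint likelihood ratio, your single-release accounting, and the two chain facts you isolate (left/right additivity of max-influence conditioned on $X_i$, and conditional independence beyond a quilt node) are exactly the right ingredients; they are the content of Lemma~\ref{lem:max influence ineq} and of the key inequality in the paper's proof. But the step that carries all the difficulty is missing, and the scheme you propose in its place does not work as described. You want to process the releases ``from the innermost quilt to the outermost, integrating out at each step the freshly exposed remote block,'' so that release $k$ is charged only for its own nearby interval. This cannot be formalized release-by-release: each query $\calF_k$ is a function of the \emph{entire} chain, so every Laplace density $p(Z_k = w_k - \calF_k(x))$ depends on every block of coordinates, and there is no nesting of integrals in which release $k$'s density appears only inside an integral over its own nearby set. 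The coupling of all $K$ densities through the shared unknown coordinates is precisely what makes Pufferfish composition fail in general (Theorem~\ref{thm:sequential_any_pf_counterex}), and your proposal defers exactly this point (``verifying that conditioning on outer quilt nodes does not inflate the inner max-influence terms'') rather than resolving it.

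The paper resolves it with a single \emph{simultaneous} conditioning, and your obstacle paragraph misdiagnoses this option: you claim a common conditioning set would have to be the \emph{outermost} quilt and would charge every release for $\card{\bigcup_k X_N^k}$ nodes. Instead, the paper conditions on the union $X_\RQ = \bigcup_k X_{\RQ}^k$, i.e., leaves free only the \emph{intersection} $X_N = \bigcap_k X_N^k$, whose boundary is the innermost combined quilt $X_Q = \{X_{i-\min_k a^k}, X_{i+\min_k b^k}\}$. Then (a) given $X_\RQ$ and $X_i$, all $K$ outputs vary only through the $\card{X_N} \le \card{X_N^k}$ common free coordinates, so release $k$'s Laplace factor is bounded pointwise by $\exp\big((\epsilon_k - \et(X_Q^k|X_i))\card{X_N}/\card{X_N^k}\big) \le \exp\big(\epsilon_k - \et(X_Q^k|X_i)\big)$; each release is charged with its \emph{own} quilt's influence subtracted, so no over-counting occurs even though the conditioning set is common. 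And (b) the prior ratio $p(X_\RQ = x_\RQ \mid X_i=a,\theta)/p(X_\RQ = x_\RQ \mid X_i=b,\theta)$ is charged \emph{once}: since $X_Q$ separates $X_i$ from the rest of $X_\RQ$, it collapses to the ratio for $X_Q$ alone and is at most $e^{\et(X_Q|X_i)}$. The argument closes with $\et(X_Q|X_i) \le \sum_k \et(X_Q^k|X_i)$, proved from your facts (i) and (ii): split $\et(X_Q|X_i)$ into left and right singleton influences and dominate each by the influence of the quilt $X_Q^k$ containing that singleton, via Lemma~\ref{lem:max influence ineq}. Your proposal contains these last two facts but not the conditioning-and-accounting scheme that makes them applicable to the joint ratio, so as it stands it is an outline of the target rather than a proof.
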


This result shows that \mqm\ on Markov chain achieves the same composition guarantee as pure differential privacy. 
Comparing to the composition results provided in \cite{ourpufferfish}, i.e., Theorem~\ref{thm:old_compose}, Theorem~\ref{thm:sequential_mqm_chain} provides better privacy guarantee under less restricted conditions.
It does not require the same Markov Quilts to be used in the two runs of \mqm. 
Moreover, the privacy guarantee is better when $\epsilon_k$'s are different -- $\sum_k \epsilon_k$ as opposite to $K\max_k \epsilon_k$.

\section{Conclusion}
In conclusion, motivated by emerging sensing applications, we study composition properties of Pufferfish, a form of inferential privacy, for certain kinds of time-series data. We provide both sequential and parallel composition results. Our results illustrate that while Pufferfish does not have strong composition properties in general, variants of the recently introduced Markov Quilt Mechanism that guarantees Pufferfish privacy for time series data, do compose well, and have strong composition properties comparable to pure differential privacy. We believe that these results make these mechanisms attractive for practical time series applications. 

\section*{Acknowledgment}
We thank Joseph Geumlek, Sewoong Oh and Yizhen Wang for initial discussions. This work was partially supported by NSF under IIS 1253942, ONR under N00014-16-1-2616 and a Google Faculty Research Award.

\bibliographystyle{plain}
\bibliography{ref}

\appendix
\section{Proofs of Composition Results}
\subsection{Proofs for Parallel Composition Results}
\begin{proof}(of Theorem~\ref{thm:para_any_pf})
Consider the case when the secret pair is $(X_{T_2}=a,X_{T_2}=b)$ for some $a,b$. For any $\theta \in \Theta$, we have
\begin{align*}
&	\frac{p(\MA(\XA)=\wA,\MB(\XB)=\wB | X_{T_2} = a,\theta)}{p(\MA(\XA)=\wA,\MB(\XB)=\wB | X_{T_2} = b,\theta)} \\
= \ &	\frac{p(\MA(\XA)=\wA| X_{T_2} = a,\theta)}{p(\MA(\XA)=\wA| X_{T_2} = b,\theta)} \\
&\times
\frac{p(\MB(\XB)=\wB| X_{T_2} = a,\theta)}{p(\MB(\XB)=\wB| X_{T_2} = b,\theta)}
\end{align*}
since $\XA$ and $\XB$ are independent conditioned on $X_{T_2}$. The first ratio is upper bounded by $e^\epsilonA$ since $\MA$ is $\epsilonA$-Pufferfish private.
The second ratio can be written as
\begin{align}\label{eqn:comp parallel general wB}
& \frac{p(\MB(\XB)=\wB| X_{T_2} = a,\theta)}{p(\MB(\XB)=\wB| X_{T_2} = b,\theta)}\nonumber\\
=&	\frac{\int_{} p(\MB(\XB)=\wB,X_{T_3}=x_{T_3} | X_{T_2} = a,\theta) d {x_{T_3}}}{\int_{} p(\MB(\XB)=\wB,X_{T_3}=x_{T_3} | X_{T_2} = b,\theta) d {x_{T_3}}} \nonumber \\
=&	\frac{\splitfrac{\int_{} p(\MB(\XB)=\wB|X_{T_3}=x_{T_3},\theta)}{ p(X_{T_3}=x_{T_3} | X_{T_2} = a,\theta) d {x_{T_3}}}}{\splitfrac{\int_{} p(\MB(\XB)=\wB|X_{T_3}=x_{T_3},\theta)}{ p(X_{T_3}=x_{T_3} | X_{T_2} = b,\theta) d {x_{T_3}}}}
\end{align}
where the second equality follows from the fact that $\XB$ is independent of $X_{T_2}$ given $X_{T_3}$. \\
Since $\max_{a,b,x_{T_3}}\frac{p(X_{T_3}=x_{T_3} | X_{T_2} = a,\theta)}{p(X_{T_3}=x_{T_3} | X_{T_2} = b,\theta)} \leq e^{e_{\theta}(X_{T_3} | X_{T_2})}$ and \\ $\et(X_{T_3}|X_{T_2})\leq \eT(X_{T_3}|X_{T_2})$, \eqref{eqn:comp parallel general wB} can be upper bounded by 
$$e^{\eT(X_{T_3} | X_{T_2})}.$$

On the other hand, \eqref{eqn:comp parallel general wB} is also upper bounded by 
\begin{align*}
&	\frac{\splitdfrac{\max_{x_{T_3}} p(\MB(\XB)=\wB|X_{T_3}=x_{T_3},\theta)}{ \int_{} p(X_{T_3}=x_{T_3} | X_{T_2} = a,\theta) d {x_{T_3}}}}{\splitdfrac{\min_{x_{T_3}} p(\MB(\XB)=\wB|X_{T_3}=x_{T_3},\theta)}{ \int_{} p(X_{T_3}=x_{T_3} | X_{T_2} = b,\theta) d {x_{T_3}}}} \\
=&	\frac{\max_{x_{T_3}} p(\MB(\XB)=\wB|X_{T_3}=x_{T_3},\theta)}{\min_{x_{T_3}} p(\MB(\XB)=\wB|X_{T_3}=x_{T_3},\theta)} 
\leq e^\epsilonB,
\end{align*} 
where the equality follows becasue $\int_{} p(X_{T_3}=x_{T_3} | X_{T_2}=x_{T_2},\theta) d {x_{T_3}} = 1$ for any $x_{T_2}$. Therefore \eqref{eqn:comp parallel general wB} is upper bounded by $\min\{e^{\eT(X_{T_3} | X_{T_2})}, e^{\epsilonB}\}$.

Combining the bound of the first ratio, we get 
\begin{align*}
&\frac{p(\MA(\XA)=\wA,\MB(\XB)=\wB | X_{T_2} = a,\theta)}{p(\MA(\XA)=\wA,\MB(\XB)=\wB | X_{T_2} = b,\theta)} \\
\leq& \min\{e^{\epsilonA + \eT(X_{T_3} | X_{T_2})}, e^{\epsilonA + \epsilonB}\}.
\end{align*}
If the secret pair is $(X_{i}=a,X_{i}=b,\theta)$ for some $a,b$ where $T_1 \leq i < T_2$, we have
\begin{align*}
&	p(\MA(\XA)=\wA,\MB(\XB)=\wB | X_i = a,\theta)\\
=&	\int_{} \ p(\MA(\XA)=\wA,\MB(\XB)=\wB,X_{T_2}=x_{T_2} | \\
& X_i = a,\theta) d{x_{T_2}} \\
=&	\int_{} \ p(\MA(\XA)=\wA,\MB(\XB)=\wB, X_i = a|X_{T_2}=\\
& x_{T_2} ,\theta) p(X_{T_2}=x_{T_2},\theta) / p(X_i=a,\theta) d{x_{T_2}} \\
=&	\int_{} \ p(\MA(\XA)=\wA, X_i = a|X_{T_2}=x_{T_2},\theta)  p(\MB(\XB)=\\
& \wB| X_{T_2}=x_{T_2},\theta) p(X_{T_2}=x_{T_2},\theta) / p(X_i=a,\theta) d{x_{T_2}} \\
=&	\int_{} \ p(\MA(\XA)=\wA, X_{T_2}=x_{T_2}|X_i = a,\theta) \\
& p(\MB(\XB)=\wB|X_{T_2}=x_{T_2},\theta) d{x_{T_2}},
\end{align*}
where the third equality is because $\XA,X_i$ are independent of $\XB$ given $X_{T_2}$.\\
Therefore we have
\begin{align*}
& \frac{p(\MA(\XA)=\wA,\MB(\XB)=\wB | X_i = a,\theta)}{p(\MA(\XA)=\wA,\MB(\XB)=\wB | X_i = b,\theta)}\\
= &		\frac{\splitdfrac{\int_{} p(\MA(\XA)=\wA, X_{T_2}=x_{T_2}|X_i = a,\theta)}{ p(\MB(\XB)=\wB|X_{T_2}=x_{T_2},\theta) d{x_{T_2}}}}{\splitdfrac{\int_{} p(\MA(\XA)=\wA, X_{T_2}=x_{T_2}|X_i = b,\theta)}{ p(\MB(\XB)=\wB|X_{T_2}=x_{T_2},\theta) d{x_{T_2}}}}\\
\leq &
        \frac{\max_{x_{T_2}}p(\MB(\XB)=\wB|X_{T_2}=x_{T_2},\theta)}{\min_{x_{T_2}}p(\MB(\XB)=\wB|X_{T_2}=x_{T_2},\theta)} \\
    &    \frac{\int_{} p(\MA(\XA)=\wA, X_{T_2}=x_{T_2}|X_i = a,\theta) d{x_{T_2}}}{\int_{} p(\MA(\XA)=\wA, X_{T_2}=x_{T_2}|X_i = b,\theta) d{x_{T_2}}}\\
\leq &	
        \frac{\max_{x_{T_2}}p(\MB(\XB)=\wB|X_{T_2}=x_{T_2},\theta)}{\min_{x_{T_2}}p(\MB(\XB)=\wB|X_{T_2}=x_{T_2},\theta)} \\
    &    \frac{p(\MA(\XA)=\wA|X_i = a,\theta)}{p(\MA(\XA)=\wA|X_i = b,\theta)}\\
\leq & 	\min\{e^{\epsilonA + \eT(X_{T_3} | X_{T_2})}, e^{\epsilonA + \epsilonB}\},
\end{align*}
where the last step follows from our previous bound for \eqref{eqn:comp parallel general wB} and the fact that $\MA$ guarantees $\epsilonA$ Pufferfish privacy.

The same analysis can be applied to the case where the secret is $(X_i=a,X_i=b,\theta)$ for some $T_3\leq i\leq T_4$ and the upper bound is $\min\{e^{\epsilonB + \eT(X_{T_2} | X_{T_3})}, e^{\epsilonA+\epsilonB}\}$.
\end{proof}
\begin{proof}(of Theorem~\ref{thm:para_mqm})
Denote the noises added by \mqm\ for Alice and Bob by $\ZA,\ZB$ respectively. Consider any secret pair of the form $(\XA_i=a,\XA_i=b)$. We want to upper bound the following ratio for any $\wA, \wB, \theta, i, a,b$.
\begin{align*}
\frac	{P(\FA(\XA)+\ZA=\wA, \FB(\XB)+\ZB=\wB | \XA_i=a, \theta)}
		{P(\FA(\XA)+\ZA=\wA, \FB(\XB)+\ZB=\wB | \XA_i=b, \theta)}.		
\end{align*}

By assumption, there exists some $X_j \in \XA$ whose active Markov Quilt is $X_{Q,j} = \{X_{j-a}, X_{j+b}\}$ with corresponding $X_{R,j}$ and $X_{N,j}$; and we have $\sigma_{\max}^\theta \geq \sigma_j^\theta = \card{X_{N,j}} / (\epsilon - \et(X_{Q,j} | X_j))$.

The main idea of the proof is that we can ``borrow" the Markov Quilt of $X_j$ as the Markov Quilt for any $X_i \in \XA$ because doing so will not increase the noise scale $\sigma_{\max}^\theta$. 
There are three cases:

\begin{enumerate}
\item If $i-a \geq 1$ and $i+b\leq T_2$, then let $X_Q = \{X_{i-a}, X_{i+b}\}$ (we omit the subscript $i$ for simplicity) with corresponding $X_R = \{X_k\}_{1\leq k<i-a \text{ or } i+b<k\leq T_2}$ and $X_N = \\ \{X_k\}_{i-a<k<i+b}$.
\item If $i-a \geq 1$ and $i+b > T_2$, then let $X_Q = \{X_{i-a}, X_{i+b}\}$ with corresponding $X_R = \{X_k\}_{1\leq k<i-a}$ and $X_N = \{X_k\}_{i-a<k<i+b}$.
\item If  $i+b\leq T_2$ and $i-a \leq 0$,  then let $X_Q = \{X_{i+b}\}$ with corresponding $X_N = \{X_k\}_{1\leq k<i+b}$ and $X_R = \{X_k\}_{i+b<k\leq T_2}$. 
\end{enumerate}

Notice that when \effect\ is approximated with \\ Lemma~\ref{lem:effect_approx}, for any $i$ and $j$, we have $e_\theta(\{X_{i-a},X_{i+a}\} | X_i) = e_\theta(\{X_{j-a},X_{j+a}\} | X_j)$, i.e., the \effect\ is only affected by the relative distance between $X_i$ and its Markov Quilt.

Therefore, in the first two cases, we have $\sigma_i^\theta(X_Q) = \sigma_j^\theta(X_{Q,j})$ since the \effect\ and the size of nearby nodes are the same; in the last case, since $\et(X_Q | X_i) \leq \et(X_{Q,j} | X_j)$ and $\card{X_N} \leq \card{X_{N,j}}$, we have $\sigma_i^\theta(X_Q) \leq \sigma_j^\theta(X_{Q,j})$. Therefore we know that $\Lap(\sigma_{\max}^\theta)$ suffices to protect $X_i$.

Let $X_{\RQ} = X_R \cup X_Q$. 
We can split $X_{\RQ}$ into two parts, $X_{\inner}=\{X_j \in X_\RQ, j>i\}$ which is closer to the middle of $\XA$ and $\XB$, and $X_{\out}=\{X_j \in X_\RQ, j<i\}$ which is closer to the boundary of the Markov chain.
For the three cases respectively, we have
\begin{enumerate}
\item $X_{\inner} = \{X_k\}_{i+b\leq k\leq T_2}$ and $X_{\out} =  \{X_k\}_{1\leq k\leq i-a}$.
\item $X_{\inner} = \{X_{i+b}\}$ and $X_{\out} =  \{X_k\}_{1\leq k\leq i-a}$.
\item $X_{\inner} = \{X_k\}_{i+b\leq k\leq T_2}$ and $X_{\out} =  \emptyset$.
\end{enumerate}

By assumption, $\XA$ and $\XB$ are far enough, and thus $X_{i+b} \notin \XB$ and $X_{\inner} \cap \XB = \emptyset$.

Then we have
\begin{align*}
&		P(\FA(\XA)+\ZA=\wA, \FB(\XB)+\ZB=\wB | X_i=a, \theta)\\
=&	\int_{} P(\FA(\XA)+\ZA=\wA, \FB(\XB)+\ZB=\wB,\\ &X_{\out}=x_{\out},X_{\inner}=x_{\inner}| X_i=a, \theta) d{x_{\out}} d{x_{\inner}}\\
=&	\int_{} P(\FA(\XA)+\ZA=\wA, \FB(\XB)+\ZB=\wB, X_{\out}=\\
& x_{\out},X_i=a|X_{\inner}=x_{\inner} , \theta)P(X_{\inner}=x_{\inner}|\theta) / \\ 
& P(X_i=a|\theta)d{x_{\out}} d{x_{\inner}}\\
=&	\int_{} P(\FA(\XA)+\ZA=\wA, X_{\out}=x_{\out},X_i=a|X_{\inner}=\\
& x_{\inner}, \theta)P(\FB(\XB)+\ZB=\wB|X_{\inner}=x_{\inner}, \theta) \\
& P(X_{\inner}=x_{\inner}|\theta) / 
 P(X_i=a|\theta)d{x_{\out}} d{x_{\inner}}\\
=&	\int_{} P(\FA(\XA)+\ZA=\wA, X_{\out}=x_{\out},X_{\inner}=x_{\inner}|X_i=a,\\
& \theta)P(\FB(\XB)+\ZB=\wB|X_{\inner}=x_{\inner}, \theta)d{x_{\out}} d{x_{\inner}},
\end{align*}
where the third equality follows because $X_{\inner}$ separates $\XB$ with $\XA$. 

By Lemma \ref{lem:fx and xRQ}, for any $a$,$b$ and and $x_\RQ$, 
\begin{align*}
\frac{P(\FA(\XA)+\ZA=\wA, X_{\RQ}=x_{\RQ}|X_i=a, \theta)}{P(\FA(\XA)+\ZA=\wA, X_{\RQ}=x_{\RQ}|X_i=b, \theta)} \leq e^\epsilonA.
\end{align*}
Therefore for any ${X_{\out},X_{\inner}} $ we have
\begin{align*}
&\frac{\splitfrac{P(\FA(\XA)+\ZA=\wA, X_{\out}=x_{\out},X_{\inner}=x_{\inner}|X_i=a, \theta)}{P(\FB(\XB)+\ZB=\wB|X_{\inner}=x_{\inner}, \theta)}}
	 {\splitfrac{P(\FA(\XA)+\ZA=\wA, X_{\out}=x_{\out},X_{\inner}=x_{\inner}|X_i=b, \theta)}{P(\FB(\XB)+\ZB=\wB|X_{\inner}=x_{\inner}, \theta)}} \\
& \leq e^\epsilonA,
\end{align*}
and therefore
\begin{align*}
&	\frac{P(\FA(\XA)+\ZA=\wA, \FB(\XB)+\ZB=\wB | X_i=a, \theta)}{P(\FA(\XA)+\ZA=\wA, \FB(\XB)+\ZB=\wB | X_i=b, \theta)} \\
\leq&	\max_{x_{\out},x_{\inner}} \frac{\splitfrac{P(\FA(\XA)+\ZA=\wA, X_{\out}=x_{\out},X_{\inner}=x_{\inner}|}{X_i=a, \theta)P(\FB(\XB)+\ZB=\wB|X_{\inner}=x_{\inner}, \theta)}}
		 {\splitfrac{P(\FA(\XA)+\ZA=\wA, X_{\out}=x_{\out},X_{\inner}=x_{\inner}|}{X_i=b, \theta)P(\FB(\XB)+\ZB=\wB|X_{\inner}=x_{\inner}, \theta)}}\\
\leq& e^\epsilonA.
\end{align*}
When the secret pair is of the form $(\XB_i=a,\XB_i=b)$, similar argument applies and the bound is $e^\epsilonB$.

Therefore for Pufferfish parameter $(\SA\cup\SB,\QA\cup\QB,\Theta)$, $(\MQM(X,\FA,\epsilonA,(\SA,\QA,\Theta)), \MQM(\XB,\FB,\epsilonB, \\ (\SB,\QB,\Theta)))$ guarantees $\max(\epsilonA,\epsilonB)$-Pufferfish Privacy.
\end{proof}

\subsection{Proofs for Sequential Composition Results}
\begin{proof}(of Theorem~\ref{thm:sequential_any_pf_counterex})
Consider a Markov chain with two nodes: $X=\{X_1,X_2\}$ and state space $\{0,1\}$. Let $\Pr(X_2=1|X_1=1)=p, \Pr(X_2=1|X_1=0)=q$, i.e., the transition matrix is $[1-q, q; 1-p, p]$. Suppose the set of secret pairs is 
$\calQ=\{(X_1=0,X_1=1)\}$.
Let $\calF$ be the summation function, i.e., $\calF(X)=X_1+X_2$. Suppose mechanism $\calM$ outputs $\calF(X)+Z$ where $Z = \Lap(1)$.

Denote 
\begin{align*}
S(w) =& \frac{p(\calF(X)+Z=w | X_1=1)}{p(\calF(X)+Z=w | X_1=0)}\\
D(w,w') =& \frac{{p(\calF(X)+Z=w,\calF(X)+Z=w' |}{ X_1=1)}}{{p(\calF(X)+Z=w,\calF(X)+Z=w'|}{X_1=0)}}.
\end{align*}

So $\calM(X)$ guarantees $\log \max_{w}\{S(w),1/S(w)\}$-Pufferfish privacy and $(\calM(X),\calM(X))$ guarantees $\log \max_{w,w'}\{D(w,w'),1/D(w,w')\}$-Pufferfish privacy.

We aim at proving that $2\log(S) < \log(D)$, equivalently, $S^2 < D$.

Let $w$ denote the event that $\calF(X)+Z=w$, and $\sigma$ denote the probability distribution of $Z$, i.e., $\Lap(1)$.
We have
\begin{align*}
S(w)%
=&	\frac{p(w | X_1=1,X_2=1)p(X_2=1|X_1=1)}{p(w | X_1=0,X_2=1)p(X_2=1|X_1=0)} \\&\frac{ + p(w | X_1=1,X_2=0)p(X_2=0|X_1=1)}{ + p(w | X_1=0,X_2=0)p(X_2=0|X_1=0)}\\
=&	\frac{\sigma(w-2)p + \sigma(w-1)(1-p)}{\sigma(w-1)q + \sigma(w)(1-q)}
\end{align*}
and
\begin{align*}
&D(w,w')=\\
&	\frac{\sigma(w-2)\sigma(w'-2)p + \sigma(w-1)\sigma(w'-1)(1-p)}{\sigma(w-1)\sigma(w'-1)q + \sigma(w)\sigma(w')(1-q)}.
\end{align*}
\rmk{In the ratio of probabilities conditioned on $X_2$, assuming $\Pr(X_1=0)=\Pr(X_1=1)$, $p$ becomes $p/(p+q) < p$, and $q$ becomes $(1-p)/(2-p-q) > q$; and thus the final possible values will only be smaller.}\\

To simplify the analysis, we consider
\begin{align*}
D(w)=D(w,w)=	\frac{\sigma(2w-4)p + \sigma(2w-2)(1-p)}{\sigma(2w-2)q + \sigma(2w)(1-q)}
\end{align*}
which equals to $\max_{w'=w}\{D(w,w'), 1/D(w,w')\}$, and thus is a lower bound of  $\max_{w,w'}\{D(w,w'), 1/D(w,w')\}$.

There are in total 3 possible values of $S(w)$ under different $w$. 
Case 1: $w\in(-\inf, 0]$.
\begin{align*}
S(w)=\frac{e^{w-2}p + e^{w-1}(1-p)}{e^{w-1}q + e^{w}(1-q)}=\frac{p + e^{1}(1-p)}{e^{1}q + e^{2}(1-q)}
\end{align*}
Case 2: $w\in[0, 1]$.
\begin{align*}
S(w)=&\frac{e^{w-2}p + e^{w-1}(1-p)}{e^{w-1}q + e^{-w}(1-q)}
=\frac{p + e^{1}(1-p)}{e^{1}q + e^{2-2w}(1-q)}\\
=&\frac{p + e^{1}(1-p)}{e^{1}q + e^{2}(1-q)} \text{ or } \frac{p + e^{1}(1-p)}{e^{1}q + (1-q)}
\end{align*}
Case 3: $w\in[1, 2]$.
\begin{align*}
S(w)
=&\frac{e^{w-2}p + e^{-w+1}(1-p)}{e^{-w+1}q + e^{-w}(1-q)}
=\frac{e^{2w-2}p + e^{1}(1-p)}{e^{1}q + (1-q)}\\
=&\frac{p + e^{1}(1-p)}{e^{1}q + (1-q)} \text{ or } \frac{e^{2}p + e^{1}(1-p)}{e^{1}q + (1-q)}
\end{align*}
Case 4: $w\in[2, \inf)$.
\begin{align*}
S(w)
=\frac{e^{-w+2}p + e^{-w+1}(1-p)}{e^{-w+1}q + e^{-w}(1-q)}
=\frac{e^{2}p + e^{1}(1-p)}{e^{1}q + (1-q)}.
\end{align*}
So in total $S(w)$ has 3 possible values:
\begin{align*}
\frac{p + e^{1}(1-p)}{e^{1}q + e^{2}(1-q)} \leq \frac{p + e^{1}(1-p)}{e^{1}q + (1-q)} \leq \frac{e^{2}p + e^{1}(1-p)}{e^{1}q + (1-q)},
\end{align*}
and because of the inequality relation, possible values of $\max_{w}\{S(w),1/S(w)\}$ are
\begin{align*}
& \frac{e^{1}q + e^{2}(1-q)}{p + e^{1}(1-p)} = e\frac{q + e(1-q)}{p + e(1-p)}, \\
& \frac{e^{2}p + e^{1}(1-p)}{e^{1}q + (1-q)} = e\frac{ep + (1-p)}{eq + (1-q)}.
\end{align*}
Similarly, possible values of $\max_{w}\{D(w),1/D(w)\}$ are:
\begin{align*}
e^{2}\frac{q + e^{2}(1-q)}{p + e^{2}(1-p)},  e^{2}\frac{e^{2}p + (1-p)}{e^{2}q + (1-q)}.
\end{align*}
Note that the first possible value of $S(w)$ is larger whenever the first value of $D(w)$ is larger. (Both are quadratic centered at $1/2$.)\\
Assume $p=0.9, q=0.01$. Then 
\begin{align*}
&(\max_{w}\{S(w),1/S(w)\})^2 = e^2 \max\{5.3132, 6.2672\} \\
&\max_{w,w'}\{D(w,w'),1/D(w,w')\} \geq \max_{w}\{D(w),1/D(w)\} \\	=& e^2 \max\{4.4695, 6.3448\}
\end{align*}
Therefore when $\calM(X)$ guarantees $\epsilon$-differential privacy, \\ $(\calM(X),\calM(X))$ cannot guarantee $2\epsilon$-differential privacy.
\end{proof}
\rmk{Considering binary states. Pufferfish composes perfectly when every node is independent ($p=q$) or completely dependent ($p=1,q=0$). So there is probably some term measuring $\min\{p-q,p-(1-q)\}$.
}
\begin{proof}(of Theorem~\ref{thm:sequential_any_pf})
By the assumption of the theorem, for all $s_i \in \calS, \theta \in \Theta$ and all $w_1, w_2$, we have
\begin{align*}
e^{-E} \leq \frac{P(\MA(X) = w_1, \MB(X) = w_2 | s_i, \theta)}{P(\MA(X) = w_1 | s_i, \theta) P(\MB(X) = w_2 | s_i, \theta)} \leq e^{E},
\end{align*}
which is equivalent to 
\begin{align*}
e^{-E} \leq \frac{P(\MB(X) = w_2 | \MA(X) = w_1 , s_i, \theta)}{P(\MB(X) = w_2 | s_i, \theta)} \leq e^{E}.
\end{align*}
For any $(s_i, s_j) \in \calQ$ and any $w_1, w_2$, we have
\begin{align*}
&	\log \frac{P(\MA(X) = w_1, \MB(X) = w_2 | s_i, \theta)}{P(\MA(X) = w_1, \MB(X) = w_2 | s_j, \theta)}\\
=&	\log \frac{P(\MA(X) = w_1 | s_i, \theta)}{P(\MA(X) = w_1 | s_j, \theta)} + \\
 & \log \frac{P(\MB(X) = w_2 | \MA(X) = w_1, s_i, \theta)}{P(\MB(X) = w_2 | \MA(X) = w_1, s_j, \theta)}\\
\leq&	\log \frac{P(\MA(X) = w_1 | s_i, \theta)}{P(\MA(X) = w_1 | s_j, \theta)} + \\
 & \log \frac{P(\MB(X) = w_2 | s_i, \theta)}{P(\MB(X) = w_2 | s_j, \theta)} \frac{e^{E}}{e^{-E}}\\
\leq&	\epsilonA + \epsilonB + 2E.
\end{align*}
\end{proof}
\begin{proof}(of Theorem~\ref{thm:sequential_mqm_chain})
For simplicity we consider $K=2$ first.

Consider any secret pair $(X_i=\a,X_i=\b)$ and any $\theta\in\Theta$. Let $X_Q^1 = \{X_{i-\aone}, X_{i+\bone}\}$ be the active Markov Quilt of $X_i$ in the first publication 
 (with $X_N^1 = \{X_k\}_{i-\aone<k<i+\bone}$, $X_R^1 = \{X_k\}_{1\leq k<i-\aone \text{ or } i+\bone<k\leq T}$)
, and $X_Q^2 = \{X_{i-\atwo}, X_{i+\btwo}\}$ be that in the second
 (with $X_N^2 = \{X_k\}_{i-\atwo<k<i+\btwo}$, $X_R^2 = \{X_k\}_{1\leq k<i-\atwo \text{ or } i+\btwo<k\leq T}$).
 Denote $X_R^j\cup X_Q^j$ as $X_\RQ^j=$ for $j=1,2$.
 
 Let $X_\RQ^{} = \cup_{j=1}^{2} X_\RQ^j$. Let $X_N^{} = \cap_{j=1}^{2} X_N^j$, which is guaranteed to be non-empty since it contains at least $X_i$. 
 Let $X_Q = \{X_{i-\min(\aone,\atwo)}, X_{i+\min(\bone,\btwo)}\}$, i.e., we pick from $X_{i-\aone}$ and $X_{i-\atwo}$ (and also $X_{i+\bone}$ and $X_{i+\btwo}$) the ones that are closer to $X_i$. Note that this is a valid Markov Quilt of $X_i$, with corresponding nearby set $X_N^{}$ and remote set $X_\RQ^{}\backslash X_Q$.

Let $Z_1,Z_2$ denote the Laplace noises added by \mqm\ for the two releases respectively.  
For simplicity, we omit the $\theta$ term in the probabilities and assume all $X_j$s are distributed according to $\theta$.
Then for any $w_1,w_2$, we have
\begin{align}\label{eqn:comp serial main}
&\frac{p(\calF_1(X)+Z_1=w_1, \calF_2(X)+Z_2=w_2 | X_i=\a)}{p(\calF_1(X)+Z_1=w_1, \calF_2(X)+Z_2=w_2 | X_i=\b)} \nonumber\\
=&\frac	
        {\splitfrac{\int_{} p(\calF_1(X)+Z_1=w_1, \calF_2(X)+Z_2=w_2,}{X_\RQ^{}=x_\RQ^{}| X_i=\a) d{x_\RQ^{}}}}
		{\splitfrac{\int_{} p(\calF_1(X)+Z_1=w_1, \calF_2(X)+Z_2=w_2,}{X_\RQ^{}=x_\RQ^{}| X_i=\b) d{x_\RQ^{}}}} \nonumber\\
=&\frac	{\splitfrac{\int_{} p(\calF_1(X)+Z_1=w_1, \calF_2(X)+Z_2=w_2|X_\RQ^{}=}{x_\RQ^{}, X_i=\a)           p(X_\RQ^{}=x_\RQ^{}|X_i=\a) d{x_\RQ^{}}}}
		{\splitfrac{\int_{} p(\calF_1(X)+Z_1=w_1, \calF_2(X)+Z_2=w_2|X_\RQ^{}=}{x_\RQ^{}, X_i=\b) p(X_\RQ^{}=x_\RQ^{}|X_i=\b) d{x_\RQ^{}}}} \nonumber\\
\leq&\max_{x_\RQ^{}}\frac	
        {\splitfrac{p(\calF_1(X)+Z_1=w_1, \calF_2(X)+Z_2=w_2|X_\RQ^{}=}{x_\RQ^{}, X_i=\a) p(X_\RQ^{}=x_\RQ^{}|X_i=\a)}}
		{\splitfrac{p(\calF_1(X)+Z_1=w_1, \calF_2(X)+Z_2=w_2|X_\RQ^{}=}{x_\RQ^{}, X_i=\b) p(X_\RQ^{}=x_\RQ^{}|X_i=\b)}}.
\end{align}
First, consider the first ratio in \eqref{eqn:comp serial main}. Let $X_{N\backslash\{i\}}^{} = X_N^{}\backslash X_{i}$ denote all ``nearby" nodes except for $X_i$. 
Let $\calF_*(x_i,{x_{N\backslash\{i\}}^{}},x_\RQ^{})$ denote the function value of $\calF_*$ when $X_i=x_i$, ${X_{N\backslash\{i\}}^{}}={X_{N\backslash\{i\}}^{}={x_{N\backslash\{i\}}^{}}}$ and $X_\RQ^{}=x_\RQ^{}$.
We have for $x_i = \a$ or $\b$,
\begin{align*}
&	p(\calF_1(X)+Z_1=w_1, \calF_2(X)+Z_2=w_2| \\  &X_\RQ^{}=x_\RQ^{}, X_i=x_i) \\
=	\int_{} \ & p(\calF_1(X)+Z_1=w_1, \calF_2(X)+Z_2=w_2, \\&{X_{N\backslash\{i\}}^{}={x_{N\backslash\{i\}}^{}}}| X_\RQ^{}=x_\RQ^{}, X_i=x_i) d {x_{N\backslash\{i\}}^{}}\\
=	\int_{} \ & p(\calF_1(X)+Z_1=w_1, \calF_2(X)+Z_2=w_2|\\&{X_{N\backslash\{i\}}^{}={x_{N\backslash\{i\}}^{}}}, X_\RQ^{}=x_\RQ^{}, X_i=x_i) \\&p({X_{N\backslash\{i\}}^{}={x_{N\backslash\{i\}}^{}}}|X_\RQ^{}=x_\RQ^{},X_i=x_i) d {x_{N\backslash\{i\}}^{}}\\
=	\int_{} \ & p(Z_1=w_1-\calF_1(a,{x_{N\backslash\{i\}}^{}},x_\RQ^{})) \\&p(Z_2=w_2-\calF_2(a,{x_{N\backslash\{i\}}^{}},x_\RQ^{})) \\&p({X_{N\backslash\{i\}}^{}={x_{N\backslash\{i\}}^{}}}|X_\RQ^{}=x_\RQ^{},X_i=x_i) d {x_{N\backslash\{i\}}^{}},
\end{align*}
where the last equality follows because $Z_1$ and $Z_2$ are independent given the value of $\calF_1, \calF_2$. 

Now we can consider the ratio of the above formula at $X_i=\a$ and $\b$.
Notice that $\calF_1,\calF_2$ can change by at most $\card{X_N^{}} \leq  \min\{\card{X_N^{1}}, \card{X_N^{2}}\}$ when $X_\Ni^{}$ and $X_i$ change. Therefore we have
\begin{align*}
&\max_{{x_{N\backslash\{i\}}^{}}, {x'_{N\backslash\{i\}}}}
\frac {p(Z_1=w_1-\calF_1(a,{x_{N\backslash\{i\}}^{}},x_\RQ^{}))} {p(Z_1=w_1-\calF_1(b,{x'_{N\backslash\{i\}}},x_\RQ^{}))} \\
\leq & e^{(\epsilon_1 - \et(X_Q^1|X_i))/\card{X_N^1} \times \card{X_N}} \leq e^{\epsilon_1 - \et(X_Q^1|X_i)}, \\
&\max_{{x_{N\backslash\{i\}}^{}}, {x'_{N\backslash\{i\}}}}
\frac {p(Z_2=w_2-\calF_2(a,{x_{N\backslash\{i\}}^{}},x_\RQ^{}))} {p(Z_2=w_2-\calF_2(b,{x'_{N\backslash\{i\}}},x_\RQ^{}))} \\
\leq & e^{(\epsilon_2 - \et(X_Q^2|X_i))/\card{X_N^2} \times \card{X_N}} \leq e^{\epsilon_2 - \et(X_Q^2|X_i)}.
\end{align*}
Moreover, for any $x_i$, $\int p({X_{N\backslash\{i\}}^{}={x_{N\backslash\{i\}}^{}}}|X_\RQ^{}=x_\RQ^{},X_i=x_i) d {x_{N\backslash\{i\}}^{}}$ equals to $1$. Therefore the first ratio in \eqref{eqn:comp serial main} is upper bounded by 
\begin{align}\label{eqn:comp serial 1st}
e^{\epsilon_1 + \epsilon_2 - \et(X_Q^1|X_i) - \et(X_Q^2|X_i)}.
\end{align}
Then we consider the second ratio in \eqref{eqn:comp serial main}.
\begin{align*}
&\frac{p(X_\RQ^{}=x_\RQ^{}|X_i=\a)}{p(X_\RQ^{}=x_\RQ^{}|X_i=\b)} \\
=&	\frac{p(X_R=x_R|X_Q=x_Q,X_i=\a)p(X_Q=x_Q|X_i=\a)}{p(X_R=x_R|X_Q=x_Q,X_i=\b)p(X_Q=x_Q|X_i=\b)} \\
=&	\frac{p(X_R=x_R|X_Q=x_Q)p(X_Q=x_Q|X_i=\a)}{p(X_R=x_R|X_Q=x_Q)p(X_Q=x_Q|X_i=\b)} \\
=&	\frac{p(X_Q=x_Q|X_i=\a)}{p(X_Q=x_Q|X_i=\b)} 
\leq	e^{\et(X_Q|X_i)},
\end{align*}
where the second equality comes from the fact that $X_Q$ is a valid Markov Quilt of $X_i$, and the last inequality follows from the definition of \effect.\\
Now we show that 
\begin{align}\label{eqn:comp serial together}
\et(X_Q^1|X_i) + \et(X_Q^2|X_i) 
\geq \et(X_Q | X_i).
\end{align} 
If $X_Q$ is equal to one of $X_Q^1$ and $X_Q^2$, the inequality hold trivially.
Otherwise, according to Lemma~\ref{lem:max influence ineq}, we have
\begin{align*}
 & \et(\{X_a,X_b\} | X_i) + \et(\{X_\atwo,X_\btwo\} | X_i) \\
\geq &  
 \et(X_{\max(\aone,\atwo)} | X_i) + \et(X_{\min(\bone,\btwo)} | X_i) \\
\geq & \et(\{X_{\max(\aone,\atwo)}, X_{\min(\bone,\btwo)}\} | X_i).
\end{align*}
Combining with \eqref{eqn:comp serial 1st}, we know that \eqref{eqn:comp serial main} is upper bounded by 
\begin{align*}
e^{\epsilon_1 + \epsilon_2 - \et(X_Q^1|X_i) - \et(X_Q^2|X_i)} e^{\et(X_Q|X_i)} \leq e^{\epsilon_1+\epsilon_2}.
\end{align*}
{The same analysis holds for any number of compositions. Suppose we have $\{X_Q^j = \{X_{a^j}, X_{b^j}\}\}_{j=1}^k$ as the quilts used in all the $k$ runs of \mqm. We can set $X_N$ to be the intersubsection of all $X_N^j$, $X_\RQ$ be the union of all $X_\RQ^j$, and $X_Q = \{X_{i-\min(\{a^j\})}, X_{i+\min(\{b^j\})}\}$. Then we still have \eqref{eqn:comp serial together}; basically, apart from the two $X_Q^*$'s each contributing to one side of $X_Q$, other $\et(X_Q^* | X_i)$'s are not used. So the bound is still pretty loose; the more runs we have, the looser the bound is.}
\end{proof}

\subsection{Other Lemmas}
\begin{lemma}\label{lem:max influence ineq}
Let $X_S$ and $X_R$ be two sets of nodes in a Bayesian network such that $X_S \subseteq X_R$.
For any $\Theta$, we have
\begin{align*}
\eT(X_{S} | X_i) \leq \eT(X_{R} | X_i).
\end{align*}
\end{lemma}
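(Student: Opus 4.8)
The plan is to derive the monotonicity of \effect\ directly from Definition~\ref{def:effect}, using the fact that the distribution of $X_S$ is a marginal of that of $X_R$. Since $X_S \subseteq X_R$, I would first write $X_R$ as $X_S$ together with the remaining nodes $X_{R\setminus S}$, so that any realization $x_R$ of $X_R$ factors as a pair $(x_S, y)$, where $x_S$ is a realization of $X_S$ and $y$ ranges over the realizations of $X_{R\setminus S}$. Marginalizing the joint conditional over $y$ then gives, for every $\theta\in\Theta$ and every state $a\in\calX$,
\[
P\big(X_S = x_S \mid X_i = a, \theta\big) = \int P\big(X_R = (x_S, y) \mid X_i = a, \theta\big)\, dy.
\]

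Next I would fix an arbitrary $\theta\in\Theta$, states $a,b\in\calX$ and a realization $x_S$, and bound the ratio of these marginals. For the same $\theta$ and every value $y$, Definition~\ref{def:effect} yields the pointwise bound $P(X_R = (x_S,y)\mid X_i=a,\theta)\le e^{\eT(X_R\mid X_i)}\,P(X_R = (x_S,y)\mid X_i=b,\theta)$, since $\et\le\eT$ and the log-ratio on the right is maximized over all realizations in the definition of $\eT(X_R\mid X_i)$. The one step that requires mild care is transporting this pointwise inequality through the integral: because the integrands are nonnegative and share the common factor $e^{\eT(X_R\mid X_i)}$, integrating both sides over $y$ preserves the inequality and gives
\[
\frac{P(X_S = x_S \mid X_i = a, \theta)}{P(X_S = x_S \mid X_i = b, \theta)} \le e^{\eT(X_R\mid X_i)}.
\]

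Finally I would take logarithms and then the maximum over $a,b$ and $x_S$ together with the supremum over $\theta\in\Theta$ on the left-hand side; since $\eT(X_R\mid X_i)$ on the right already incorporates the corresponding worst case over the larger index set, it dominates, yielding $\eT(X_S\mid X_i)\le\eT(X_R\mid X_i)$. I do not anticipate a genuine obstacle: this is a data-processing-style monotonicity result, and marginalization is merely a deterministic post-processing of $X_R$ into $X_S$, which cannot increase the worst-case conditional log-likelihood ratio. The only substantive point is the elementary observation that $\alpha_y\le C\beta_y$ for all $y$ with $\beta_y\ge 0$ implies $\int\alpha_y\,dy\le C\int\beta_y\,dy$.
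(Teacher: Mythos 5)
Your proof is correct and follows essentially the same route as the paper: decompose $X_R$ into $X_S$ and $X_{R\setminus S}$, express the marginal conditional of $X_S$ as a sum (integral) of the joint over the extra coordinates, bound each term pointwise by $e^{\eT(X_R \mid X_i)}$ times the corresponding denominator term, and conclude by taking the worst case over realizations, states, and $\theta$. The only cosmetic difference is that the paper works with $e_\theta$ for each fixed $\theta$ and takes the supremum at the end, whereas you invoke $\et \le \eT$ inside the pointwise bound; both are valid.
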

\begin{proof}
Let $T = R \backslash S$,for any $\theta\in\Theta$ we have
\begin{align*}
&\exp(e_{\theta}(X_R | X_i))\\
=&	\max_{x_R, x_i, x_i'} \frac{p(X_R=x_R | X_i=x_i,\theta)}{p(X_R=x_R | X_i=x_i',\theta)} \\
=&	\max_{x_S, x_T, x_i, x_i'} \frac{p(X_S=x_S, X_T=x_T | X_i=x_i,\theta)}{p(X_S=x_S, X_T=x_T | X_i=x_i',\theta)}.
\end{align*}
Then we have
\begin{align*}
&e_{\theta}(X_{S} | X_i,\theta)\\
=&	\max_{x_S, x_i, x_i'} \frac{p(X_S=x_S | X_i=x_i,\theta)}{p(X_S=x_S | X_i=x_i',\theta)} \\
=&	\max_{x_S, x_i, x_i'} \frac{\sum_{X_T=x_T} p(X_S=x_S, X_T=x_T | X_i=x_i,\theta)}{\sum_{x_T} p(X_S=x_S, X_T=x_T | X_i=x_i',\theta)} \\
\leq&	\max_{x_S, x_i, x_i'} \frac{\sum_{x_T} e^{e_{\theta}(X_R | X_i)} p(X_S=x_S, X_T=x_T | X_i=x_i',\theta)}{\sum_{x_T} p(X_S=x_S, X_T=x_T | X_i=x_i',\theta)} \\
=&	\exp(e_{\theta}(X_R | X_i)),
\end{align*}
where the inequality is from the definition of \effect.
Since this holds for all $\theta\in\Theta$, we have $\eT(X_{S} | X_i,\theta) \leq \eT(X_{R} | X_i,\theta)$.
\end{proof}

Here we also prove a useful lemma which is similar to the privacy guarantee of \mqm\ proved in \cite{ourpufferfish}.

\begin{lemma}\label{lem:fx and xRQ}
For any secret pair $(X_i=a, X_i=b) \in \calQ$ and any $\theta \in \Theta$, let $X_Q$ be the Markov Quilt for $X_i$ which has the minimum score $\sigma(X_Q)$, and suppose that deleting $X_Q$ breaks up the underlying Bayesian network into $X_N$ and $X_R$ where $X_i \in X_N$. Then for any $w$ and any realization $x_\RQ$ of $X_\RQ$, 
\begin{align*}
&	\frac{ p(F(X) + \sigma_{\max}\cdot Z = w, X_\RQ=x_\RQ | X_i = a, \theta) }{p(F(X) + \sigma_{\max} \cdot Z = w , X_\RQ=x_\RQ| X_i = b, \theta)} \leq e^\epsilon.
\end{align*}
\end{lemma}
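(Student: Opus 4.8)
The plan is to follow the standard Markov Quilt Mechanism privacy argument, the only new twist being that here the remote configuration $X_\RQ = x_\RQ$ is held fixed rather than marginalized out. Write $X_{\Ni} = X_N \setminus \{X_i\}$ for the nearby nodes other than $X_i$. First I would expand the left-hand probability by marginalizing over these unfixed nearby nodes,
\begin{align*}
& p(F(X) + \sigma_{\max} Z = w, X_\RQ = x_\RQ | X_i = x_i, \theta) \\
= & \int p(F(X) + \sigma_{\max} Z = w | X_{\Ni} = x_{\Ni}, X_\RQ = x_\RQ, X_i = x_i, \theta) \\
& \quad \times p(X_{\Ni} = x_{\Ni}, X_\RQ = x_\RQ | X_i = x_i, \theta) \, d x_{\Ni}.
\end{align*}
Once $X_i$, $X_{\Ni}$ and $X_\RQ$ are all pinned down the whole chain is determined, so the first factor is simply the Laplace density $\tfrac{1}{\sigma_{\max}} f_Z\big((w - F(x_i, x_{\Ni}, x_\RQ))/\sigma_{\max}\big)$, where $f_Z$ is the density of $\Lap(1)$ and $F(x_i, x_{\Ni}, x_\RQ)$ is the query value on the fully specified chain.

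Next I would split the second factor as $p(X_{\Ni} = x_{\Ni} | X_\RQ = x_\RQ, X_i)\, p(X_\RQ = x_\RQ | X_i)$, where only the conditional density depends on the integration variable. Taking the ratio at $X_i = \a$ versus $X_i = \b$ then separates into two pieces: the scalar ratio $p(X_\RQ = x_\RQ | X_i = \a)/p(X_\RQ = x_\RQ | X_i = \b)$, and a ratio of integrals over $x_{\Ni}$ against the conditional densities $p(X_{\Ni} | X_\RQ = x_\RQ, X_i = \a)$ and $p(X_{\Ni} | X_\RQ = x_\RQ, X_i = \b)$. For the first piece I invoke condition~2 of the Markov Quilt (Definition~\ref{def:mquilt}), namely $p(X_\RQ = x_\RQ | X_i) = p(X_R = x_R | X_Q = x_Q)\, p(X_Q = x_Q | X_i)$; the $X_i$-free factor $p(X_R = x_R | X_Q = x_Q)$ cancels, leaving $p(X_Q = x_Q | X_i = \a)/p(X_Q = x_Q | X_i = \b) \le e^{\et(X_Q | X_i)}$ directly from Definition~\ref{def:effect} of \effect.

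For the integral ratio I would use $1$-Lipschitzness of $F$: moving $X_i$ together with the $\card{X_N} - 1$ coordinates of $X_{\Ni}$ changes $F$ by at most $\card{X_N}$, so for all $x_{\Ni}, x'_{\Ni}$ the corresponding Laplace densities obey $f_Z\big((w - F(\a, x_{\Ni}, x_\RQ))/\sigma_{\max}\big) \le e^{\card{X_N}/\sigma_{\max}} f_Z\big((w - F(\b, x'_{\Ni}, x_\RQ))/\sigma_{\max}\big)$. The crucial step is to lift this pointwise inequality to a bound between the two integrals: since it holds for every $x'_{\Ni}$, integrating its right-hand side against the density $p(X_{\Ni} | X_\RQ = x_\RQ, \b)$ (which integrates to $1$) shows that the numerator integrand is, pointwise in $x_{\Ni}$, at most $e^{\card{X_N}/\sigma_{\max}}$ times the entire denominator integral; integrating this against $p(X_{\Ni} | X_\RQ = x_\RQ, \a)$ then bounds the integral ratio by $e^{\card{X_N}/\sigma_{\max}}$. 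Finally, because $X_Q$ is the minimum-score quilt its score satisfies $\card{X_N}/(\epsilon - \et(X_Q | X_i)) = \sigma_i^\theta \le \sigma_{\max}$, hence $\card{X_N}/\sigma_{\max} \le \epsilon - \et(X_Q | X_i)$; multiplying the two pieces yields $e^{\card{X_N}/\sigma_{\max}} \cdot e^{\et(X_Q | X_i)} \le e^{\epsilon}$, as required.

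The step I expect to be the main obstacle is this integral-ratio bound, since the conditional law of the unfixed nearby nodes $X_{\Ni}$ itself shifts when $X_i$ moves from $\a$ to $\b$. Trying to match integration variables between numerator and denominator is awkward; the device of dominating the numerator integrand by $e^{\card{X_N}/\sigma_{\max}}$ times the whole denominator integral — using only that the nearby conditional densities integrate to $1$ — is what neutralizes the shifting distribution and leaves the Lipschitz bound on $F$ as the only operative constraint.
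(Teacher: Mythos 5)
Your proof is correct and follows essentially the same route as the paper's: condition on $X_\RQ$, bound the quilt ratio by $e^{\et(X_Q|X_i)}$ via Definition~\ref{def:mquilt} and the definition of \effect, and bound the query ratio by $e^{\card{X_N}/\sigma_{\max}} \le e^{\epsilon - \et(X_Q|X_i)}$ using $1$-Lipschitzness and the Laplace density. The only difference is that you spell out the marginalization over $X_{\Ni}$ and the integral-domination step explicitly, which the paper states tersely as a one-line Lipschitz argument.
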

\begin{proof}
Pick a secret pair $(X_i=a, X_i=b) \in \calQ$ and any $\theta \in \Theta$. Let $X_Q$ be the Markov Quilt for $X_i$ which has the minimum score $\sigma(X_Q)$, and suppose that deleting $X_Q$ breaks up the underlying Bayesian network into $X_N$ and $X_R$ where $X_i \in X_N$.\\
For any $w$ and any realization $x_\RQ$ of $X_\RQ$, we can write
\begin{align} \label{eqn:mqprivacy}
&	\frac{ p(F(X) + \sigma_{\max}\cdot Z = w, X_\RQ=x_\RQ | X_i = a, \theta) }{p(F(X) + \sigma_{\max} \cdot Z = w , X_\RQ=x_\RQ| X_i = b, \theta)} \nonumber \\
=&	\frac{p(F(X) + \sigma_{\max} Z = w | X_i = a, X_{R \cup Q} = x_{R \cup Q}, \theta) }
		 {p(F(X) + \sigma_{\max} Z = w | X_i = b, X_{R \cup Q} = x_{R \cup Q}, \theta) } \cdot \nonumber\\
 & \frac{p(X_{R \cup Q} = x_{R \cup Q} | X_i = a, \theta)}{p(X_{R \cup Q} = x_{R \cup Q} | X_i = b, \theta)}.
\end{align}
Consider the first ratio of~\eqref{eqn:mqprivacy},
\begin{equation*}
\frac{p(F(X) + \sigma_{\max} Z = w | X_i = a, X_{R \cup Q} = x_{R \cup Q}, \theta)} {p(F(X) + \sigma_{\max} Z = w | X_i = b, X_{R \cup Q} = x_{R \cup Q}, \theta)} .
\end{equation*}
Since $F$ is $1$-Lipschitz, when $X_{R \cup Q}$ is fixed, $F(X)$ can vary by at most $\card{X_N}$ (potentially when all the variables in $X_N$ change values). Since $\sigma_{\max} \geq \frac{\card{X_N}}{\epsilon - \et(X_Q | X_i)}$ for any $X_i$ with its best Markov Quilt $X_Q$, and $Z\sim\Lap(1)$, we know that the above ratio is upper bounded by 
\[ e^{\epsilon - \et(X_Q | X_i)}. \]
Then consider the second part of~\eqref{eqn:mqprivacy}. We have
\begin{align*}
& \frac{p(X_{R \cup Q} = x_{R \cup Q} | X_i = a, \theta)}{p(X_{R \cup Q} = x_{R \cup Q} | X_i = b, \theta)} \\
= & \frac{p(X_R = x_R | X_Q = x_Q, X_i = a, \theta) p(X_Q = x_Q | X_i = a, \theta)}{ p(X_R = x_R| X_Q = x_Q, X_i = b, \theta) p(X_Q = x_Q | X_i = b, \theta)}.    
\end{align*}
Since $X_Q$ is a Markov Quilt for $X_i$ and $X_i \notin X_R$, we have $p(X_R | X_Q, X_i = a, \theta) = p(X_R | X_Q, X_i = b, \theta)$. Moreover, by definition of \effect, $\frac{p(X_Q = x_Q | X_i = a, \theta)}{p(X_Q = x_Q | X_i = b, \theta)} \leq e^{\et(X_Q | X_i)}$. Therefore the above ratio is upper bounded by $$e^{\et(X_Q | X_i)}.$$
Combining the two ratios together, we can conclude that for any $w$ and any secret pair $(s^i_a, s^i_b)$,
\[ \frac{ p(F(X) + \sigma_{\max}\cdot Z = w, X_\RQ=x_\RQ | X_i = a, \theta) }{p(F(X) + \sigma_{\max} \cdot Z = w , X_\RQ=x_\RQ| X_i = b, \theta)} \leq e^{\epsilon}. \]
\end{proof}

\end{document}